\documentclass{article}

\usepackage[main, final]{neurips_2026}
\usepackage{listings}
\usepackage[utf8]{inputenc} % allow utf-8 input
\usepackage[T1]{fontenc}    % use 8-bit T1 fonts
\usepackage{hyperref}       % hyperlinks
\usepackage{url}            % simple URL typesetting
\usepackage{booktabs}       % professional-quality tables
\usepackage{amsfonts}       % blackboard math symbols
\usepackage{nicefrac}       % compact symbols for 1/2, etc.
\usepackage{microtype}      % microtypography
\usepackage{xcolor}         % colors
\usepackage{amsmath}
\usepackage{amsthm}
\newtheorem{assumption}{Assumption}
\newtheorem{definition}{Definition}
\newtheorem{lemma}{Lemma}
\newtheorem{remark}{Remark}
\newtheorem{proposition}{Proposition}

\usepackage{enumitem}
\usepackage{graphicx}
\usepackage{subcaption} 
\usepackage{float}
\usepackage{tabularx} 
\usepackage{multirow}
\usepackage{array}
\usepackage{tabularx}
\title{TagBO: LLM-Driven Task-Aware Graph Bayesian Optimization for Scientific Discovery}

\author{%
Xinzhe Yuan$^{2,1,*}$ \quad
Zimu Zeng$^{3,7,*}$ \quad
Zhuo Chen$^{4,1,*}$ \quad
Jianshu Zhang$^{4}$ \quad
Jinzong Dong$^{5}$ \\
Jingyi Chen$^{6}$ \quad
Nanyang Ye$^{4,\dagger}$ \quad
Huan Xiong$^{2,\dagger}$ \quad
Qinying Gu$^{1,\dagger}$ \\[3pt]
$^{1}$Shanghai Artificial Intelligence Laboratory \\
$^{2}$Harbin Institute of Technology \\
$^{3}$Fudan University\\
$^{4}$Shanghai Jiao Tong University \\
$^{5}$Central South University \\
$^{6}$Huazhong Agricultural University \\
$^{7}$Shenzhen Loop Area Institute\\
$^{*}$Equal contribution. \qquad
$^{\dagger}$Corresponding authors. \\[2pt]
}

\begin{document}

\maketitle

\begin{abstract}
Mixed-variable optimization in scientific discovery often requires exploring large combinatorial spaces under costly experimental evaluations, making sample-efficient Bayesian optimization (BO) essential. However, existing BO methods typically rely on structural descriptors to define similarity among categorical variables, implicitly assuming that structural similarity reflects similarity in task performance. In scientific optimization, this assumption is often violated, leading to surrogate representations that are poorly aligned with the optimization objective. To address this limitation, we propose \textbf{TagBO} (\textbf{T}ask-\textbf{a}ware \textbf{G}raph \textbf{B}ayesian \textbf{O}ptimization), a method that integrates LLM-derived relational priors with descriptor-based structural information for mixed-variable Bayesian optimization. Rather than directly using LLM outputs as predictive scores, TagBO treats LLM outputs as a potentially noisy source of task-aware relational priors and extracts coarse ordinal relational signals from LLM reasoning to construct task-aware similarity structures for scientific optimization. TagBO then performs constrained Gaussian process Bayesian optimization over the resulting joint discrete-continuous representation space. Experiments on diverse chemistry and materials formulation tasks demonstrate that TagBO consistently improves sample efficiency under fixed evaluation budgets across LLMs of varying capabilities.
\end{abstract}

\section{Introduction}
Accelerating scientific discovery through intelligent experimental design has become a central goal in AI for Science \citep{ghaghafarollahi2409sciagents,xin2025towards}. In many scientific optimization problems, practitioners must jointly select categorical variables (e.g., reagent types, reaction pathways) and tune continuous parameters (e.g., concentrations, temperatures), with each evaluation typically requiring costly real-world experiments, resulting in mixed-variable problem in which
each experimental evaluation is typically costly and
time-consuming \citep{butler2018machine,frazier2018tutorial,tom2024self,zhang2020bayesian}.
Bayesian optimization (BO) has emerged as a principled strategy
for such problems, and a central challenge it must address is
how to measure similarity among categorical choices so that the
surrogate model can generalize across the combinatorial space
\citep{baptista2018bayesian,deshwal2021bayesian,oh2019combinatorial}.

A rich body of work has tackled this challenge using domain-specific descriptors—molecular fingerprints, chemical group encodings, learned embeddings, and other structural representations \citep{gilmer2017neural,rogers2010extended,wang2016bayesian}. These methods have been widely adopted and are effective in many settings. \textbf{Yet} they often rely on a task-agnostic notion of proximity, assuming that nearby candidates in descriptor space exhibit similar task behavior—an assumption that can fail in the presence of task-specific mechanisms or activity cliffs \citep{medina2012scanning,stumpfe2019evolving}. This assumption implicitly defines the geometry over which BO generalizes. When it fails, the surrogate model may extrapolate along directions that are irrelevant, or even misleading, for the target task. For example, two solvents with nearly identical molecular structures may exhibit drastically different electrochemical stability because the mechanisms governing task performance are not captured by global structural descriptors.

These observations suggest that the challenge is not merely to design better descriptors, but to enable task-aware generalization in Bayesian optimization. In mixed-variable BO, the surrogate model generalizes according to assumptions induced by descriptor representations. When these assumptions fail to reflect the functional relationships relevant to the target task, BO may generalize along misleading directions.

Interestingly, scientific domains already contain extensive task-specific relational knowledge in the literature, including experimental findings, empirical heuristics, and mechanistic insights. Large language models (LLMs), pre-trained on broad scientific corpora, can implicitly encode how different materials or components relate under specific task contexts \citep{bran2023chemcrow,castro2023large,taylor2022galactica}. \textbf{However}, directly using raw LLM outputs to define optimization spaces remains challenging due to noise and inconsistency across candidates. At the same time, structural descriptors alone often fail to capture task-specific functional relationships. These observations suggest that neither source alone is sufficient.

To address these limitations,  we propose \textbf{TagBO} (\textbf{T}ask-\textbf{a}ware \textbf{G}raph \textbf{B}ayesian \textbf{O}ptimization), a method that integrates LLM-derived relational priors with structural information for mixed-variable Bayesian optimization. Rather than directly using LLM outputs as predictive scores, TagBO extracts ordinal relational signals from LLM reasoning to represent task-aware relationships between categorical choices. These signals are combined with structural information derived from physicochemical descriptors, enabling the optimization space to reflect both task-specific functional behavior and local structural similarity. We evaluate TagBO on diverse scientific optimization benchmarks involving chemistry and materials formulation tasks. Experimental results show that TagBO consistently improves sample efficiency under fixed experimental budgets and remains effective across LLMs of varying capabilities. Our contributions are as follows:
\begin{itemize}[
    leftmargin=0.8em,
    itemsep=2pt,
    topsep=2pt,
    parsep=0pt
]
\item \emph{Task-aware similarity modeling for scientific optimization.}
We propose TagBO, a method that incorporates LLM-derived relational knowledge to construct task-aware similarity structures for mixed-variable Bayesian optimization.
\item \emph{Ordinal relational modeling from LLM reasoning.}
TagBO treats LLM outputs as a potentially noisy source of task-aware relational priors, and extracts coarse ordinal signals from LLM reasoning for task-aware similarity modeling in scientific optimization problems.
\item \emph{Improved sample efficiency under fixed budgets.}
Across diverse mixed-variable benchmarks, TagBO consistently improves optimization performance under fixed experimental budgets and remains robust across LLMs of varying capabilities.
\end{itemize}

\section{Related Work}
\paragraph{Structured representations in Bayesian optimization.}
Many approaches construct similarity structures from domain-specific descriptors, learned embeddings, or latent representations \cite{gomez2018automatic,frazier2015materials} to support Bayesian optimization in scientific design problems, such as LADDER, LOLBO~\cite{deshwal2021combining,maus2022local}. 
These methods enable the surrogate model to generalize across large combinatorial or molecular search spaces through representation-induced similarity. However, these approaches are primarily designed for optimizing or generating individual molecular structures in continuous latent spaces, and are less suitable for mixed-variable formulation problems involving constrained combinations of heterogeneous categorical and continuous variables.

In contrast, prior work in discrete and combinatorial Bayesian optimization has focused on defining explicit similarity structures to enable generalization across candidate configurations. For example, COMBO~\cite{oh2019combo} represents the search space as a graph Cartesian product and constructs kernels over this structure, allowing the surrogate model to generalize via graph-induced similarity. CoCaBO~\cite{ru2020bayesian} further addresses mixed continuous and categorical optimization through kernels that share information across categorical variables. Across these approaches, such representations are typically task-agnostic and are not explicitly aligned with the task-specific structure of the objective, which can lead to model misspecification and misleading exploration~\cite{sollich2004robustgp}, limiting their ability to perform targeted and diverse exploration.

\paragraph{LLM-enhanced Bayesian optimization.}
Recent work has explored integrating LLMs into Bayesian optimization to guide candidate generation and improve search efficiency. LLAMBO~\cite{Liu_2024_ICLR} leverages LLMs to propose promising initial points and generate candidate solutions during optimization, while relying on standard acquisition functions for selection. BOPRO~\cite{Agarwal_2025_ICLR} performs optimization in a latent space induced by pretrained LLM embeddings. 
CAKE~\cite{suwandiadaptive} employs LLMs to adapt Gaussian process kernels during optimization, enabling more flexible inductive biases for Bayesian optimization. More recently, LGBO~\cite{yuan2026unleashing} further incorporates LLMs to guide the search process through generation or scoring mechanisms. And LABO~\cite{chen2026labo} incorporates LLM-generated predictions into Bayesian optimization through a confidence-gated dual-surrogate framework.  However, existing approaches do not explicitly construct interpretable task-aware structures over the search space, which is particularly important in scientific discovery problems where understanding relationships among candidate configurations is crucial.

\section{Method}
\begin{figure}[t]
    \centering
    \includegraphics[width=1\textwidth]{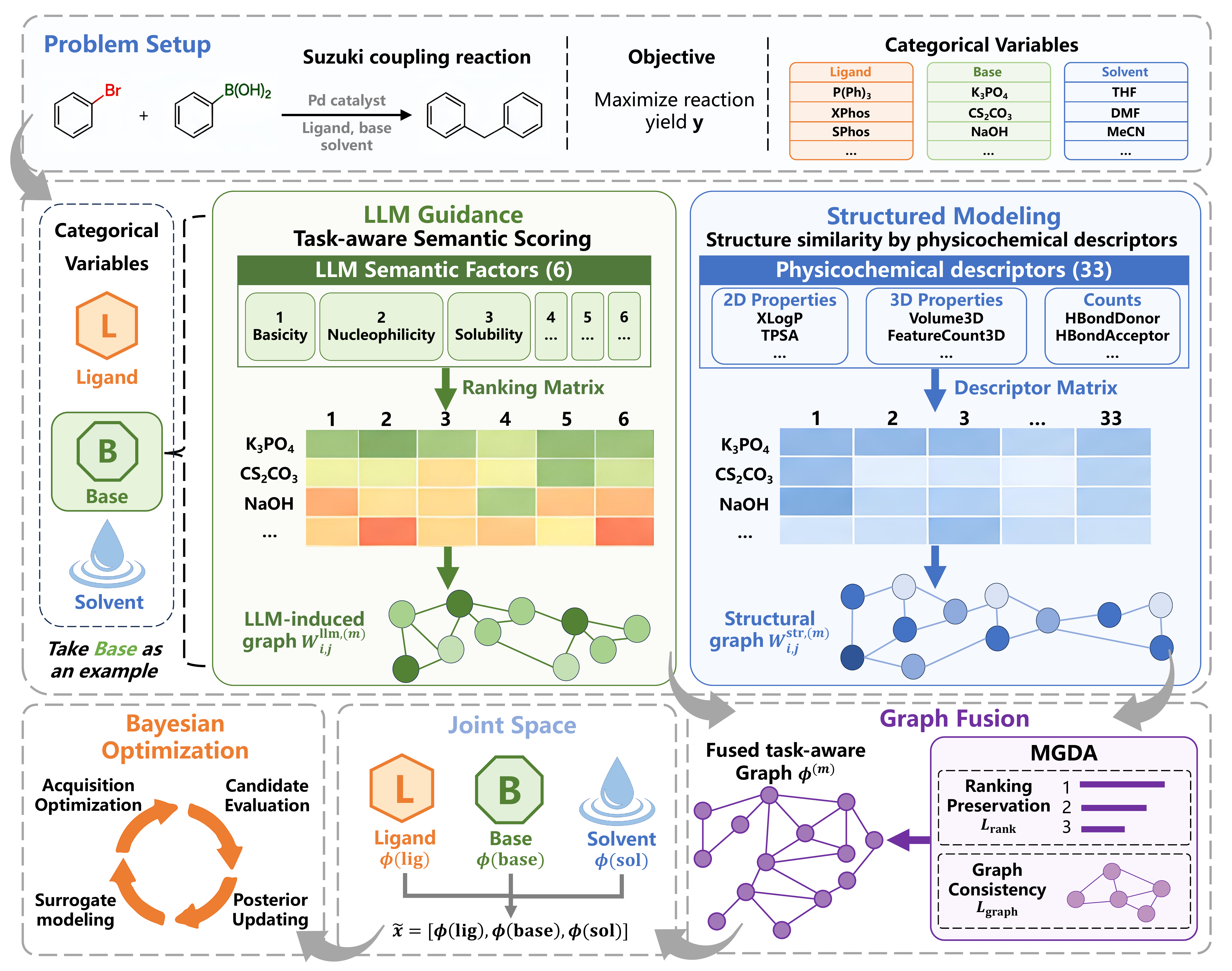}
\caption{
Overview of TagBO on Suzuki coupling reaction optimization. For each categorical variable, TagBO constructs an LLM-derived task-aware relational graph from semantic rankings under the target task, alongside a structural graph derived from physicochemical descriptors. The two signals are fused to form a task-aware optimization space, which is combined with continuous variables and used for Bayesian optimization.
}
    \label{fig:lgbo_method}
\end{figure}
\paragraph{Problem Setup.} We consider black-box optimization problems arising in scientific discovery, where the goal is to identify high-performing experimental configurations from a mixed search space. Formally, we optimize an unknown objective function over
\begin{equation}
\max_{(\mathbf{x}, \mathbf{c}) \in \mathcal{X} \times \mathbb{R}^N}
f(\mathbf{x}, \mathbf{c}),
\end{equation}
where $\mathbf{x}=(x^{(1)},\ldots,x^{(M)}) \in \mathcal{X}=\prod_{m=1}^{M}\mathcal{X}^{(m)}$ denotes a configuration of $M$ discrete decision variables, and $\mathbf{c}\in\mathbb{R}^{N}$ denotes continuous experimental variables. Each $\mathcal{X}^{(m)}$ is a finite set of candidates with explicit semantic meaning (e.g., reagents, solvents, molecular structures, or material components), where each candidate corresponds to a physically interpretable component in the experimental system. The continuous variables correspond to controllable conditions such as concentration, temperature, or reaction time.

The main difficulty lies in defining a task-aware similarity structure over the discrete space $\mathcal{X}$, which limits the applicability of standard Bayesian optimization methods. 
In addition, in scientific optimization, each categorical candidate corresponds to a physically meaningful entity, so the learned representation should preserve a one-to-one correspondence with the original candidate set. 
To this end, TagBO adopts a graph-based representation, where nodes correspond to valid categorical candidates and edges encode task-aware relational similarities. 
The key idea is to construct this graph by combining coarse-grained ordinal relational guidance from LLMs with fine-grained local structural information from external descriptors, thereby inducing a task-aware similarity structure over $\mathcal{X}$. 
Bayesian optimization is then performed over the resulting representation while respecting the feasibility constraints of the original mixed domain.

We organize the method as follows. Sec.~\ref{sec:3.1}--\ref{sec:3.3} construct task-aware representations for different discrete variables. Sec.~\ref{sec:3.4} builds the joint representation and integrates it with Bayesian optimization. Sec.~\ref{sec:3.5} provides representation-Level interpretation.

\subsection{Coarse-grained LLM Guidance}\label{sec:3.1}

Direct numerical scores produced by LLMs can be noisy or inconsistent across candidates and tasks. To obtain a more stable form of task-aware prior information, we use LLMs to extract coarse-grained ordinal relational guidance instead of precise predictive scores.

For each discrete variable type $\mathcal{X}^{(m)}$, we leverage an LLM to extract a set of task-relevant factors and to characterize candidates along these dimensions. Specifically, given the candidate set $\{x_i^{(m)}\}_{i=1}^{N} \subset \mathcal{X}^{(m)}$, the LLM first identifies $F$ key factors that are likely to influence the target objective based on its prior knowledge and the task context. Subsequently, the LLM ranks the $N$ candidates for each factor $f$ according to their relative strengths, resulting in a ranking matrix $R^{(m)} \in \{1,\dots,N\}^{N \times F}$, where $R^{(m)}_{i,f}$ denotes the rank of candidate $x_i^{(m)}$ under factor $f$. We represent each candidate using its ranking vector
\[\mathbf{r}_i^{(m)} = (R^{(m)}_{i,1}, \dots, R^{(m)}_{i,F}) \in \{1,\dots,N\}^F,\] 
which provides a coarse relational representation of candidate $x_i^{(m)}$. We interpret these ranking vectors as elements in a task-aware space $\mathcal{Z}^{(m)} := \mathbb{R}^F$ and normalize them to obtain continuous embeddings $\mathbf{z}_i^{(m)} \in \mathcal{Z}^{(m)}$. We then define pairwise distances $d_{ij}^{(m)} = \|\mathbf{z}_i^{(m)} - \mathbf{z}_j^{(m)}\|_2$, and construct a similarity matrix using an RBF kernel:
\begin{equation}
W^{\text{llm},(m)}_{ij}
=
\exp\left(
-\gamma
\|\mathbf{z}_i^{(m)} - \mathbf{z}_j^{(m)}\|_2^2
\right),
\end{equation}
where $\gamma > 0$ is a bandwidth parameter controlling the decay rate. The resulting matrix $W^{\text{llm},(m)} \in \mathbb{R}^{N \times N}$ defines a weighted graph over categorical candidates, where edge weights encode task-aware relational similarity induced from LLM reasoning. This construction converts coarse-grained ordinal relationships into a task-aware similarity structure, which later serves as relational guidance for graph regularization and Bayesian optimization.
    
    \subsection{Fine-grained Structured Modeling}\label{sec:3.2}
    
While the LLM-induced graph represents coarse-grained ordinal relational information, ranking-based signals provide only relative task-aware structure and may contain noise from LLM reasoning. They therefore do not fully characterize fine-grained local consistency among candidates. To stabilize the resulting representation space, we introduce descriptor-based structural smoothness as a complementary inductive bias. Specifically, we incorporate domain-specific descriptors obtained from external databases (e.g., PubChem~\cite{kim2016pubchem}) for each discrete variable type $\mathcal{X}^{(m)}$. Given the candidate set $\{x_i^{(m)}\}_{i=1}^{N} \subset \mathcal{X}^{(m)}$, we represent each candidate by a structural attribute vector $\mathbf{a}_i^{(m)} \in \mathbb{R}^K$, where $K$ denotes the number of descriptor features. We measure pairwise similarity in this structural space using an RBF kernel:
    \begin{equation}
    W^{\text{str},(m)}_{ij} = \exp\left(-\gamma_{\text{str}} \|\mathbf{a}_i^{(m)} - \mathbf{a}_j^{(m)}\|_2^2 \right),
    \end{equation}
    where $\gamma_{\text{str}} > 0$ is a bandwidth parameter. The resulting matrix $W^{\text{str},(m)} \in \mathbb{R}^{N \times N}$ defines a structural similarity graph over the same candidate set as $W^{\text{llm},(m)}$, and thus shares the same dimensionality. This provides a complementary view of candidate relationships, capturing fine-grained consistency that may not be reflected in the LLM-derived task-aware space.
    
\subsection{Interpretation as Implicit Graph Fusion.}\label{sec:3.3}

To obtain a stable task-aware representation for Bayesian optimization, we combine coarse-grained ordinal relational guidance from the LLM with fine-grained structural smoothness from descriptor-based similarities. For each discrete variable type $\mathcal{X}^{(m)}$, let $\mathbf{z}_i^{(m)} \in \mathcal{Z}^{(m)}$ denote the task-aware representation of candidate $x_i^{(m)}$ defined in Sec.~\ref{sec:3.1}. Our goal is to learn a refined representation that preserves the coarse relational ordering induced by the LLM while remaining locally smooth over structurally similar candidates. Specifically, we consider the following two objectives:

\paragraph{(1) Ranking preservation.}
We define a soft ranking loss that encourages the learned representation to preserve the relative ordering induced by the LLM. For each factor $f$, the ranking matrix $R^{(m)}$ provides pairwise ordering information. For any pair of candidates $(i,j)$ such that $R^{(m)}_{i,f} < R^{(m)}_{j,f}$ (i.e., $x_i^{(m)}$ is ranked higher than $x_j^{(m)}$ under factor $f$), we enforce that their learned representations maintain this ordering. The ranking loss is defined  as:
\begin{equation}
\mathcal{L}_{\text{rank}}^{(m)}(\mathbf{z}) 
= \sum_{f=1}^{F} \sum_{i,j} \mathbb{I}(R^{(m)}_{i,f} < R^{(m)}_{j,f}) \cdot 
\log\left(1 + \exp\left( - (z^{(m)}_{i,f} - z^{(m)}_{j,f}) \right)\right),
\end{equation}
where $\mathbb{I}(\cdot)$ is the indicator function. This loss penalizes violations of pairwise ordering, encouraging the learned representation to remain consistent with the LLM-induced task-aware ranking while allowing flexibility under structural constraints.

\paragraph{(2) Graph consistency.}
We enforce smoothness over both the task-aware graph $W^{\text{llm},(m)}$ and the structural similarity graph $W^{\text{str},(m)}$:
\begin{equation}
\mathcal{L}_{\text{graph}}^{(m)}(\mathbf{z}) 
= \sum_{i,j} \left(\sqrt{W^{\text{llm},(m)}_{ij} \, W^{\text{str},(m)}_{ij}} \, \|\mathbf{z}_i^{(m)} - \mathbf{z}_j^{(m)}\|_2\right)^2.
\end{equation}

The multiplicative form can be interpreted as a geometric-mean style fusion between the task-aware relational graph and the structural similarity graph. As a result, smoothness is primarily enforced between candidate pairs that are simultaneously supported by both the coarse ordinal prior and the local structural similarity. Then, the overall problem is thus formulated as a multi-objective optimization:
\begin{equation}\label{equ:mgda}
\min_{\mathbf{z}^{(m)}} \quad 
\left( 
\mathcal{L}_{\text{rank}}^{(m)}(\mathbf{z}), \;
\mathcal{L}_{\text{graph}}^{(m)}(\mathbf{z})
\right).
\end{equation}

By jointly optimizing these two objectives using the Multi-Gradient Descent Algorithm~\citep{desideri2009multiple}, we obtain a refined task-aware representation that balances coarse ordinal guidance from the LLM with fine-grained structural smoothness from descriptor similarities. Rather than manually specifying a fixed trade-off coefficient, MGDA adaptively balances these two inductive biases during optimization.

\subsection{Joint Optimization Space and Constrained BO}\label{sec:3.4}

We now construct a joint optimization space by combining the refined representations of all variable types. Suppose there are $M$ types of discrete decision variables, where the refined representation of a candidate from the $m$-th type lies in $\mathcal{Z}^{(m)} = \mathbb{R}^{F_m}$. A joint configuration in the representation space is obtained by concatenating the embeddings of selected candidates:
\begin{equation}
\tilde{\mathbf{x}} = [\mathbf{z}^{(1)}, \mathbf{z}^{(2)}, \dots, \mathbf{z}^{(M)}]
\in \mathbb{R}^{\sum_{m=1}^{M} F_m},
\end{equation}

where each $\mathbf{z}^{(m)} \in \mathcal{Z}^{(m)}$ corresponds to the refined representation of a selected candidate $x^{(m)} \in \mathcal{X}^{(m)}$. In addition, continuous decision variables $\mathbf{c} \in \mathbb{R}^N$ are appended to $\tilde{\mathbf{x}}$, yielding a unified representation space:
\begin{equation}
\tilde{\mathcal{X}} = \left\{ 
[\mathbf{z}^{(1)}, \dots, \mathbf{z}^{(M)}, \mathbf{c}] \;\middle|\;
x^{(m)} \in \mathcal{X}^{(m)}, \; \mathbf{z}^{(m)} = \phi^{(m)}(x^{(m)}), \; \mathbf{c} \in \mathcal{C}
\right\},
\end{equation}
where $\phi^{(m)}$ denotes the representation mapping defined in Sec.~\ref{sec:3.1}--\ref{sec:3.3}, and $\mathcal{C}$ denotes the domain of continuous variables. We perform Gaussian process Bayesian optimization in this representation space, with acquisition optimization restricted to the feasible set:
\begin{equation}
\tilde{\mathbf{x}}_{t+1}
= \arg\max_{\tilde{\mathbf{x}} \in \tilde{\mathcal{X}}} \alpha_t(\tilde{\mathbf{x}}),
\end{equation}
ensuring that only valid configurations are considered during the search.

\begin{remark}[Search Complexity]
Restricting optimization to $\tilde{\mathbf{x}}$ ensures that only valid configurations are evaluated. As $\tilde{\mathbf{x}}$ coincides with the original discrete candidate set, both the search space cardinality and the number of acquisition evaluations remain unchanged. Thus, the proposed representation introduces no additional computational overhead.
\end{remark}

\subsection{Representation-Level Interpretation}\label{sec:3.5}

We provide several representation-level interpretations to better understand why task-aware representations may improve Bayesian optimization performance in mixed-variable scientific search.
Our discussion focuses on three aspects:
(i) Gaussian process modeling under representation warping,
(ii) the relationship between representation quality and surrogate approximation behavior,
and (iii) the adaptive trade-off between task-aware relational guidance and structural smoothness.
Due to space limitations, technical assumptions and derivations are deferred to Appendix~\ref{proof}.

\paragraph{LLM relational priors as surrogate priors.}

Our method can be interpreted as transforming ordinal relational knowledge derived from LLM reasoning into a Gaussian process prior over the search space.
Rather than directly using LLM outputs as predictive scores, TagBO incorporates relative ranking relations into the learned representation space prior to surrogate modeling. Let $w:\mathcal{X}\to\mathcal{Z}$ denote the learned representation map induced by the fused relational-structural graph, and let
$g\sim\mathcal{GP}(m,K)$ be a Gaussian process defined on the representation space $\mathcal{Z}$.
Defining $f(x)=g(w(x))$, induces a valid Gaussian process on the original input space with kernel
\[
K_w(x,x')
=
K(w(x),w(x')) .
\]

Under this interpretation, LLM-derived ordinal relations are not used as direct predictions, but instead shape the similarity structure and inductive bias of the surrogate model itself.
This connects task-aware relational guidance with probabilistic surrogate modeling in Bayesian optimization.

\paragraph{LLM ordinal priors and surrogate approximation.}
Bayesian optimization critically depends on whether the surrogate model can generalize smoothly across the search space.
However, structural similarity alone may fail to reflect task-relevant behavior in scientific optimization problems. This motivates using LLM-derived ordinal relations as an additional task-aware source of information.
Here, we analyze the representation-level effect of the LLM-induced representation relative to purely structural and one-hot representations; the subsequent graph-smoothing formulation further regularizes this signal with structural similarity.
To formalize this intuition, let
$\mathcal{D}_W
=
\mathbb{E}
\left[
\left\|
Z^\star-\mathbb{E}[Z^\star\mid W]
\right\|^2
\right]$
denote the latent reconstruction error associated with a representation $W$, where $Z^\star$ denotes a latent task-relevant variable.
Under the representation-quality assumptions described in Appendix~\ref{proof}, if the LLM-induced ordinal representation preserves task-relevant variation more effectively than task-agnostic structural descriptors, then
\[
\mathcal{D}_L
\le
\mathcal{D}_S
\le
\mathcal{D}_O .
\]

Furthermore, if the task link $f=h(Z^\star)$ is $L_h$-Lipschitz, the corresponding surrogate approximation upper bounds satisfy
\[
\mathcal{U}_L
\le
\mathcal{U}_S
\le
\mathcal{U}_O .
\]

This interpretation does not claim that LLM information is always superior; rather, it formalizes the condition under which LLM-derived ordinal priors can provide a more task-aware surrogate prior than purely structural similarity. The role of the structural graph in TagBO is complementary: it regularizes and smooths this LLM-induced relational signal during representation learning.

\paragraph{Adaptive trade-off interpretation.}
Our multi-objective formulation jointly optimizes
ordinal relational consistency and structural graph smoothness in (\ref{equ:mgda}). Under standard MGDA~\cite{desideri2009multiple} optimality conditions (Appendix~\ref{proof}),
a Pareto-stationary solution $\mathbf{z}^\star$ admits an equivalent adaptive scalarization:
\[
\nabla
\left(
\mathcal{L}_{\mathrm{rank}}(\mathbf{z}^\star)
+
\lambda^\star
\mathcal{L}_{\mathrm{graph}}(\mathbf{z}^\star)
\right)
=0 ,
\]
for some adaptive coefficient $\lambda^\star >0$. The ranking objective preserves coarse-grained ordinal relations induced by LLM reasoning, while the structural objective regularizes these relations through local similarity constraints.
This interpretation suggests that the learned representation emerges from balancing coarse-grained ordinal relational guidance with fine-grained local structural smoothness.

\section{Experiments}

We evaluate the performance of TagBO on multiple optimization datasets from the domains of pharmaceuticals, chemistry, and materials science. These tasks are designed to provide a unified comparison of optimization efficiency and stability under limited evaluation budgets, benchmarking against both traditional Bayesian optimization methods and recent LLM-guided approaches. More experiments for stability and ablation are also provided in Appendix~\ref{app:exp}.

\subsection{Experimental Setup}

\paragraph{Baselines.} The following methods are used as baselines:

\begin{enumerate}[label=\arabic*), leftmargin=*]

\item \textbf{One-hot Feature BO:} Discrete variables are encoded using one-hot representations, and the concatenated sparse vectors are fed into a Bayesian optimization model for iterative search.

\item \textbf{Property Feature BO:} Physicochemical properties of candidate substances are extracted and concatenated into continuous feature vectors, which are then fed into a Bayesian optimization model for iterative search.

\item \textbf{COMBO~\cite{oh2019combo}:} A graph structure is constructed over the discrete combinatorial space, and Gaussian process modeling and optimization are performed based on graph kernels.

\item \textbf{BOPRO~\cite{Agarwal_2025_ICLR}:} Iteratively updates the LLM prompt using selected past generations to sample new candidates that explore or exploit different regions of the search space.

\item \textbf{LGBO~\cite{yuan2026unleashing}:} This method incorporates LLM predictions as low-cost evaluation signals into the BO method alongside real experimental results. A gating mechanism is used to dynamically balance the use of LLM predictions and real evaluations, thereby improving optimization efficiency.
\end{enumerate}

\paragraph{Tasks.} We evaluate all methods on six optimization tasks spanning chemistry, materials science, and energy-related applications. Detailed data descriptions are provided in \ref{app:data}:

\begin{enumerate}[label=\arabic*), leftmargin=*]

\item \textbf{Organic Photovoltaics \citep{mmli_opv} (2200 combinations):} Maximize spectral overlap predicted by DFT by optimizing donor--acceptor building blocks in organic photovoltaic materials.

\item \textbf{Suzuki--Miyaura Reaction \citep{Suzuki} (384 combinations):} Maximize reaction yield on a fixed-substrate Suzuki--Miyaura reaction subset by optimizing three categorical reaction conditions: ligand, base, and solvent.

\item \textbf{Organic Laser Molecules \citep{dye_lasers} (3458 combinations):} Optimize the photophysical performance of symmetric pentamer molecules by selecting three molecular fragments, with the goal of maximizing peak score and fluorescence rate while minimizing spectral overlap.

\item \textbf{Perovskite Bandgap \citep{perovskites} (192 combinations):} Predict and tune the band gap of hybrid organic--inorganic perovskites by selecting the organic, cation, and anion components.

\item \textbf{Redox Flow Battery \citep{redoxmer} (1408 combinations):} Optimize redoxmer candidates for non-aqueous redox flow batteries by tuning four molecular substituents to minimize absorption wavelength difference, reduction potential, and solvation free energy.

\item \textbf{Crystallization Process \citep{vapdiff_crystal} (mixed-variable):} Optimize the crystallization outcome in a vapor diffusion experiment by adjusting solvent identity, concentrations, vial volumes, reaction time, and temperature to maximize the ordinal crystal score.
\end{enumerate}

\paragraph{Implementation details.} All methods employ Gaussian process (GP)-based surrogate models with an RBF kernel, and optimization for each task is initialized with 10 starting points. In the main experiments, the LLM used is Intern S1 241B Thinking, a large language model pretrained on scientific knowledge. All experiments were conducted using 4 NVIDIA H200 GPUs.

\subsection{Main Results}
% As shown in Fig. \ref{fig:main_1}, TagBO demonstrates faster performance improvement across all six optimization tasks, reaching high-quality regions earlier than the baselines. Its advantage is most evident in the early and middle stages, where it identifies strong candidates more efficiently under the same evaluation budget. Despite this, TagBO does not sacrifice final solution quality. It achieves comparable or better final performance while requiring fewer evaluations to reach high-quality solutions. This suggests that leveraging LLM-derived prior knowledge helps guide the discrete search more effectively and reduces unnecessary exploration. Such efficiency is particularly beneficial for materials and molecular optimization tasks with high experimental costs.

\begin{figure}[t]
    \centering
    \includegraphics[width=1\textwidth]{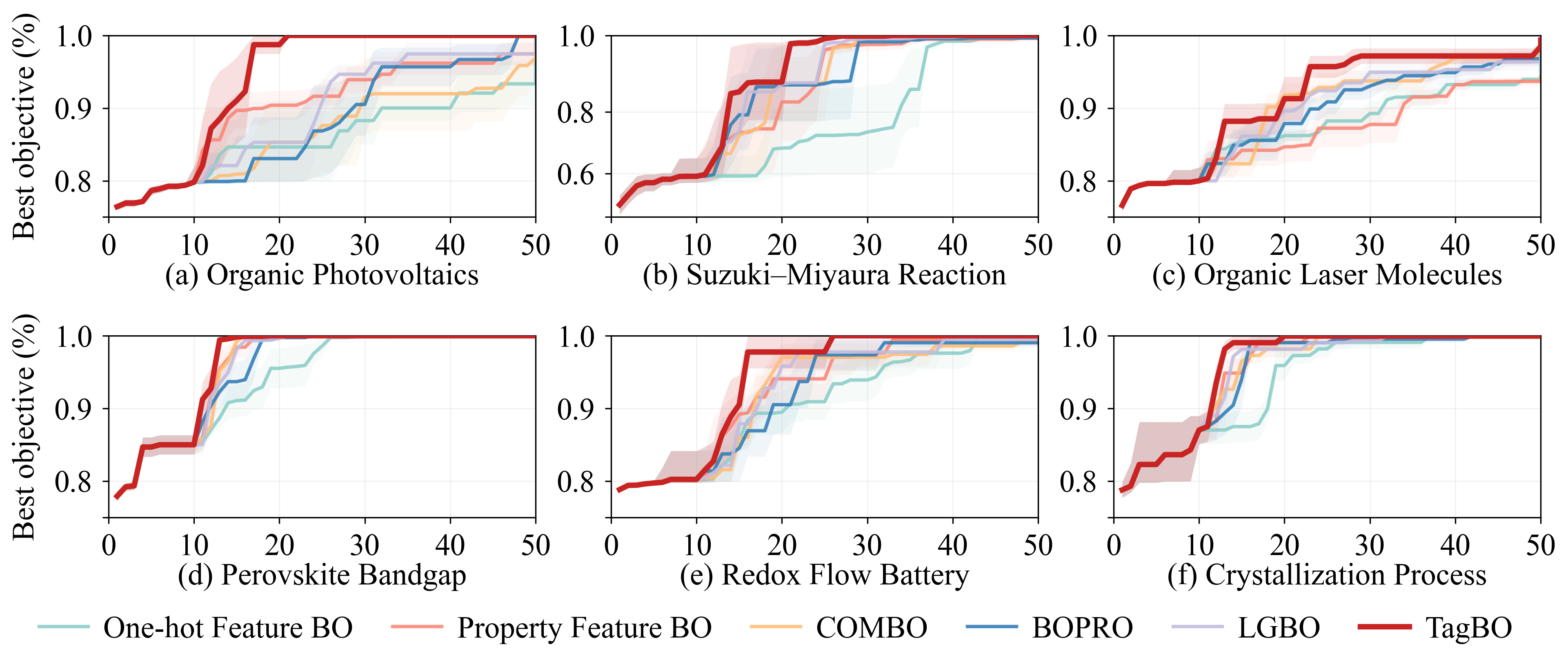}
    \caption{Optimization trajectories on six tasks. The y-axis reports the best objective value found so far, normalized by the oracle optimum, and the x-axis reports the iteration number. TagBO is compared against One-hot Feature BO, Property Feature BO, COMBO, BOPRO, and LGBO.}
    \label{fig:main_1}
\end{figure}

\begin{figure}[t]
    \centering
    \includegraphics[width=1.0\textwidth]{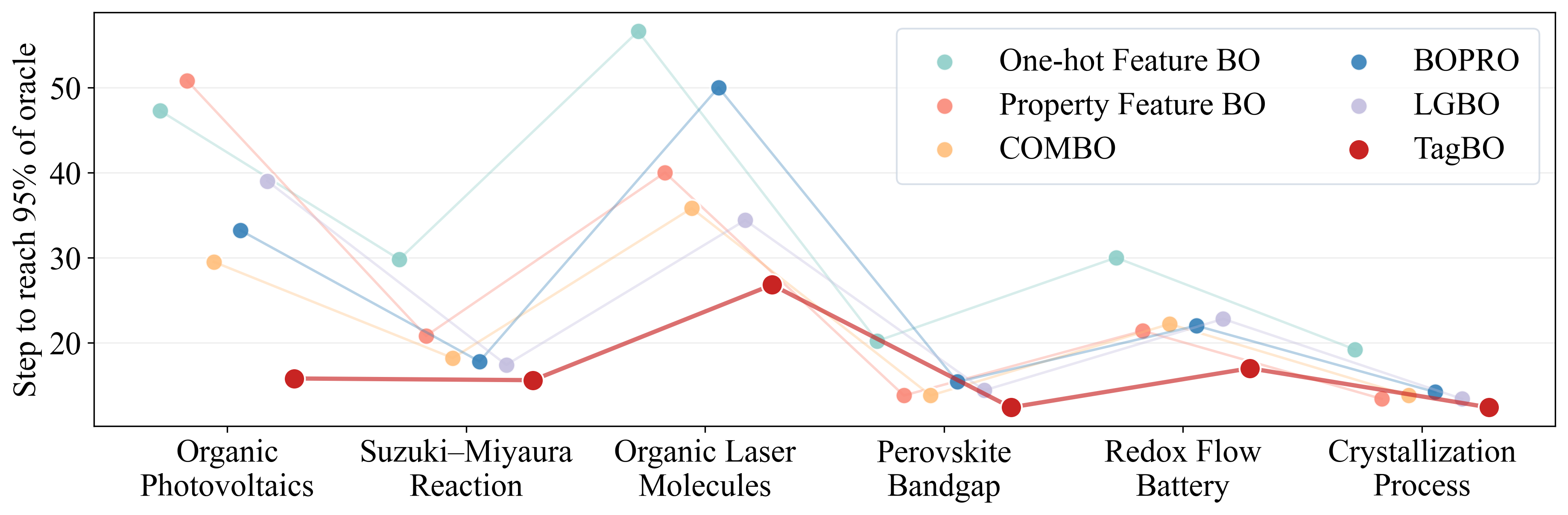}
    \caption{Comparison of the average number of iterations required to reach 95\% of the oracle optimum on six benchmark tasks. Lower is better.}
    \label{fig:main_2}
\end{figure}

As shown in Fig.~\ref{fig:main_1} and Fig.~\ref{fig:main_2}, TagBO consistently improves sample efficiency across all six optimization tasks, particularly during the early and middle stages of optimization. Under the same evaluation budget, TagBO identifies high-quality candidates with fewer evaluations while maintaining competitive final performance. This advantage is reflected quantitatively by lower Step@95\% values across all datasets, indicating faster convergence toward near-optimal solutions. The improvement is especially notable on challenging discrete optimization tasks such as Organic Photovoltaics and Organic Laser Molecules  These results suggest that the proposed representation effectively incorporates task-relevant relational priors into the BO search space, improving early-stage search efficiency in expensive scientific optimization settings.

In addition, to understand why the learned representation improves optimization performance, we visualize the reaction space induced by different feature sets using a 3D UMAP embedding~\cite{mcinnes2018umap}, as shown in Fig.~\ref{fig:feature}. Compared with one-hot and property-feature baselines, the representation learned by TagBO induces a more connected and smoother high-performing region in the projected search space. High-yield candidates become more locally clustered and exhibit improved neighborhood continuity, suggesting that the learned representation forms a more optimization-friendly landscape for Gaussian-process-based surrogate modeling. Such structure is beneficial for acquisition-guided search, as local interpolation and uncertainty estimation become more aligned with task performance.
\vspace{-1.5em}
\subsection{Ablation Experiences}

\begin{figure}[t]
    \centering
    \includegraphics[width=1\textwidth]{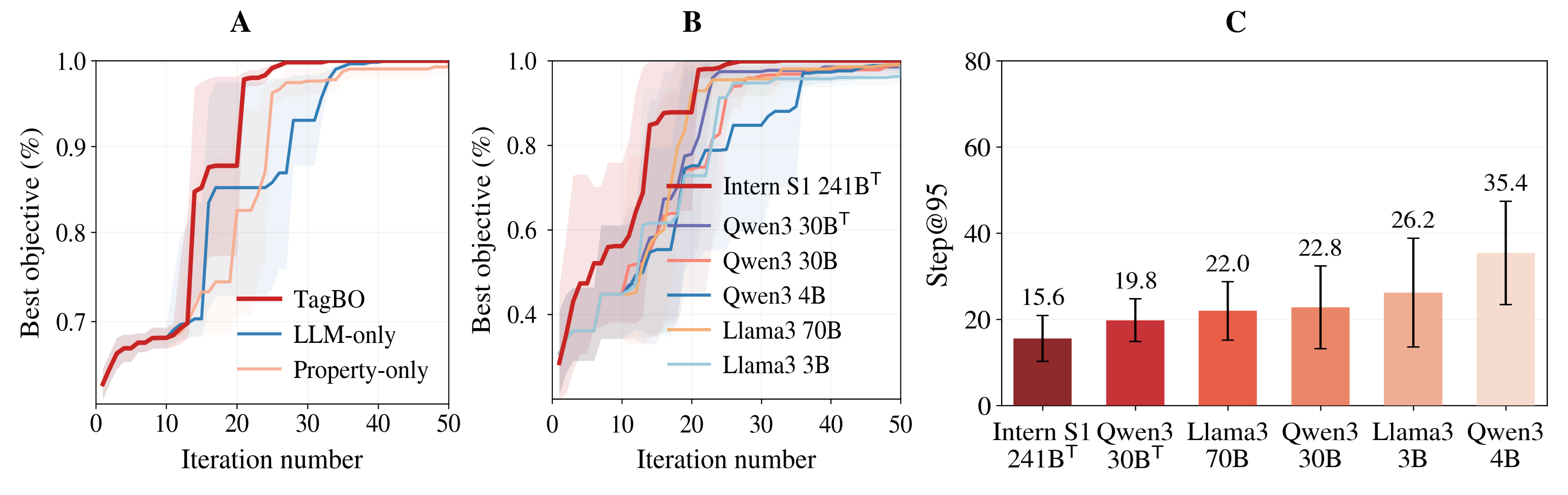}
    \caption{Ablation analysis on the Suzuki dataset. 
(A) Comparison with two reduced variants using only LLM-derived scoring features or only physicochemical property features. 
(B) Optimization trajectories by different LLMs, where curves report the best objective value found so far, normalized by the oracle optimum. 
(C) Mean Step@95\% under different LLMs. $*^T$ denote Thinking variants.}
    \label{fig:ablation_1}
\end{figure}

\begin{figure}[t]
    \centering
    \includegraphics[width=1\textwidth]{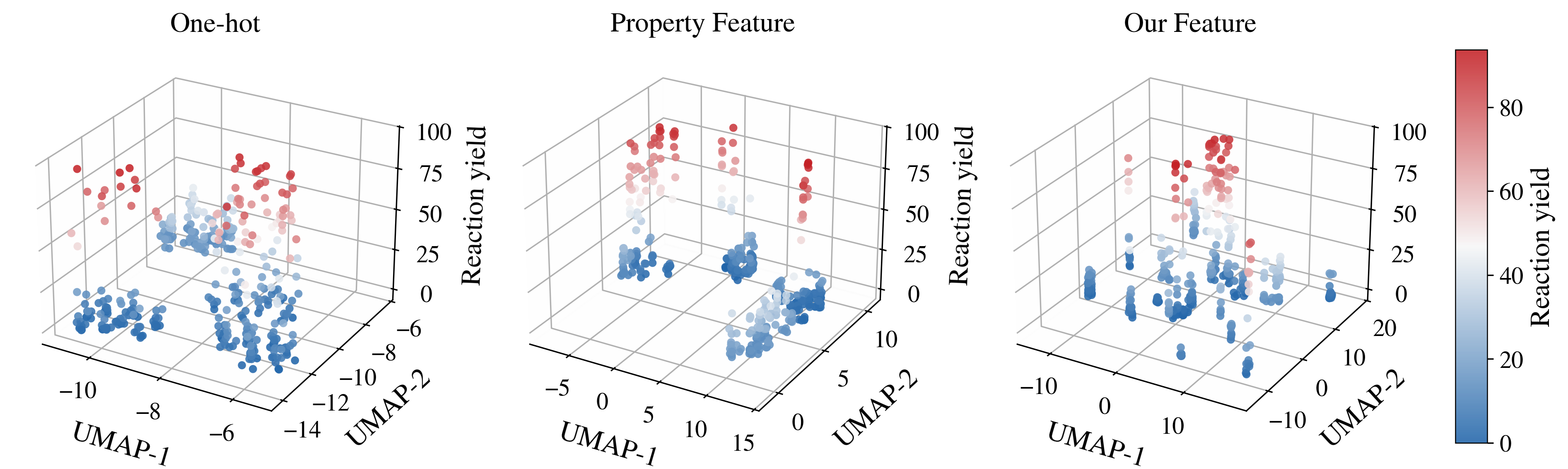}
    \caption{3D UMAP visualization of feature spaces. Each point represents a reaction candidate; the horizontal axes show the UMAP embedding, and the vertical axis encodes the reaction yield.}
    \label{fig:feature}
\end{figure}
\begin{figure}[t]
    \centering
    \includegraphics[width=0.7\linewidth]{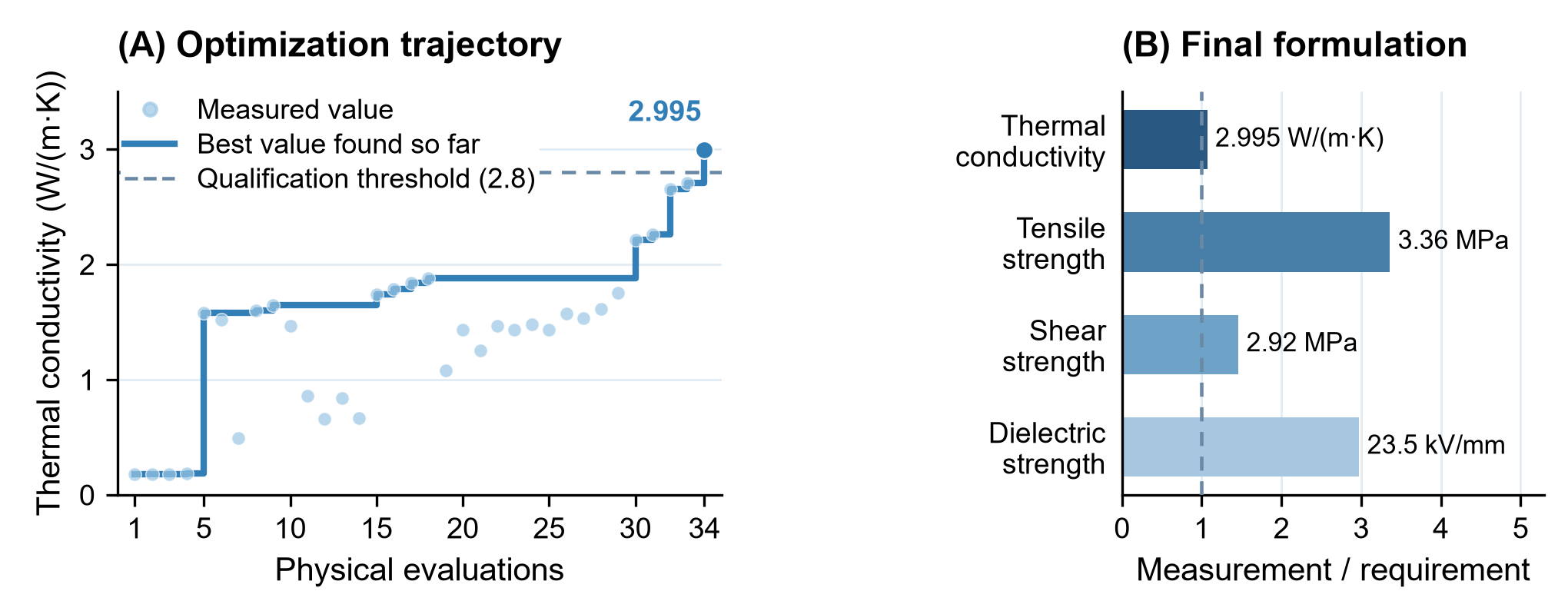}
    \caption{Real-world formulation optimization. \textbf{(A)} Thermal conductivity and the best value found so far over 34 physical evaluations. \textbf{(B)} Final measurements relative to their qualification thresholds; values above one satisfy the requirements.}
    \label{fig:real_exp}
\end{figure}

\vspace{-0.5em}

\paragraph{Representation source ablation.}
To investigate the roles of relational guidance and structural smoothness, we compare the full TagBO model with two reduced variants that use only LLM-derived relational features or only physicochemical property features. As shown in Fig.~\ref{fig:ablation_1}(A), the LLM-only variant achieves strong early-stage optimization performance, indicating that ordinal relational priors provide effective task-aware guidance during early search. However, its performance gradually saturates in later stages, suggesting that relational priors alone are insufficient for fine-grained surrogate refinement. In contrast, the structural-only variant exhibits consistently weaker performance throughout the optimization process. By combining coarse-grained relational guidance with fine-grained structural smoothness, TagBO achieves both strong early-stage exploration and improved later-stage refinement.

\paragraph{Effect of LLM prior quality.}
We compare LLM backbones with varying scales and reasoning capabilities, including \textbf{Intern-S1-241B}~\cite{bai2025intern}, \textbf{Llama-3.3-70B}~\cite{touvron2023llama}, \textbf{Qwen3-30B}~\cite{bai2023qwen}(Instruct and Thinking), as well as smaller variants such as \textbf{Qwen3-4B} and \textbf{Llama-3.2-3B}. As shown in Fig.~\ref{fig:ablation_1}(B) and Fig.~\ref{fig:ablation_1}(C), stronger models generally achieve lower Step@95\% values and improved final sample efficiency. In particular, \textbf{Intern-S1-241B Thinking} consistently reaches 95\% of the oracle optimum using the fewest evaluations, suggesting that domain-relevant scientific pretraining provides more informative relational priors for guiding optimization. Interestingly, optimization trajectories from different LLM backbones remain relatively close during early search, suggesting that the proposed ordinal relational formulation is less sensitive to precise numerical calibration from a specific LLM. Stronger models mainly improve later-stage refinement by providing more informative task-aware relational structure.
\subsection{Real-world Formulation Experiment}\label{app:real_exp}

\paragraph{Task setting.}
This task aims to maximize the measured \textbf{thermal conductivity} of a thermally conductive adhesive while satisfying mechanical and electrical performance requirements. The mixed-variable search space involves selecting 10 fillers from a library of 29 candidate materials and optimizing their 10 corresponding continuous composition variables, as summarized in Table~\ref{tab:real_exp_variables}. The thermal conductivity qualification threshold is $2.8\,\mathrm{W/(m\cdot K)}$. The formulation must also satisfy tensile strength $\geq 1\,\mathrm{MPa}$, shear strength $\geq 2\,\mathrm{MPa}$, and dielectric strength $\geq 7.9\,\mathrm{kV/mm}$. The experiment includes 34 physical evaluations, each requiring approximately three days.
\paragraph{Results.}
As shown in Fig.~\ref{fig:real_exp}, the best measured thermal conductivity increases from $0.1823\,\mathrm{W/(m\cdot K)}$ at the first evaluation to $2.995\,\mathrm{W/(m\cdot K)}$ after only 34 evaluations, exceeding the qualification threshold. The final formulation achieves a tensile strength of $3.36\,\mathrm{MPa}$, a shear strength of $2.92\,\mathrm{MPa}$, and a dielectric strength of $23.5\,\mathrm{kV/mm}$, satisfying all three performance requirements. Under the $n_{\mathrm{init}}=2d$ setting used in standard TuRBO~\citep{eriksson2019scalable} implementations, a 20-variable problem would require 40 evaluations for initialization alone. In this large combinatorial and continuous search space, TagBO thus meets all four engineering requirements within a total experimental budget smaller than this initial design, demonstrating its sample efficiency for expensive formulation optimization.

 \section{Conclusion}
 In this work, we proposed TagBO, a task-aware graph Bayesian optimization framework for mixed-variable scientific optimization. Unlike conventional approaches that rely primarily on task-agnostic structural similarity, TagBO incorporates coarse-grained ordinal relational priors extracted from LLM reasoning and combines them with fine-grained structural information through adaptive multi-objective optimization. 
{\small
\bibliographystyle{plain}
\bibliography{egbib}
}
%%%%%%%%%%%%%%%%%%%%%%%%%%%%%%%%%%%%%%%%%%%%%%%%%%%%%%%%%%%%

\appendix

\section*{Appendix}
\section{Notation and Variables}

\begin{center}
\renewcommand{\arraystretch}{1.2}
\begin{tabular}{p{0.22\textwidth} p{0.72\textwidth}}
\hline
\textbf{Symbol} & \textbf{Description} \\
\hline

\multicolumn{2}{l}{\textit{Spaces and mappings}} \\
\\
$\mathcal X$ & Input space \\
$\mathcal Z$ & Latent / transformed space \\
$w:\mathcal X \to \mathcal Z$ & Warping (representation mapping) \\
$x \in \mathcal X$ & Input variable \\
$z \in \mathcal Z$ & Transformed input \\
\\
\multicolumn{2}{l}{\textit{Gaussian process construction}} \\
$g \sim \mathcal{GP}(m, K)$ & Gaussian process defined on $\mathcal Z$ \\
$m(\cdot)$ & Mean function \\
$K(\cdot,\cdot)$ & Covariance kernel \\
$f(x) = g(w(x))$ & Induced process on $\mathcal X$ \\
$K_w(x,x')$ & Induced kernel $K(w(x), w(x'))$ \\
\\
\multicolumn{2}{l}{\textit{Latent-variable formulation}} \\
$Z^\star \in \mathbb R^d$ & Task-relevant latent variable \\
$h:\mathbb R^d \to \mathbb R$ & Task link function ($f = h(Z^\star)$) \\
$L_h$ & Lipschitz constant of $h$ \\
$X$ & Random input variable \\
$f$ & Target output variable \\
\\
\multicolumn{2}{l}{\textit{Representations}} \\
$L = w_{\mathrm{llm}}(X)$ & LLM-based representation \\
$S = w_{\mathrm{str}}(X)$ & Structural representation \\
$O = w_{\mathrm{oh}}(X)$ & One-hot representation \\
\\
\multicolumn{2}{l}{\textit{Error measures}} \\
$\mathcal D_W$ & Latent reconstruction error \\
$\mathcal R_W$ & Bayes risk \\
$\mathcal U_W$ & Surrogate upper bound ($L_h^2 \mathcal D_W$) \\
\\
\multicolumn{2}{l}{\textit{Assumption parameters and mappings}} \\
$\sigma_L^2$ & Upper bound on LLM decoding error \\
$\rho_S^2$ & Irreducible uncertainty under structural representation \\
$\psi_L$ & Decoder from $L$ to $Z^\star$ \\
$T$ & Mapping from $S$ to $O$ \\

\hline
\end{tabular}
\newpage
\vspace{0.5em}
\textbf{Table A.1:} Summary of notation used throughout the appendix.
\end{center}
\section{Proof}\label{proof}
\subsection{Warping induces a Gaussian process prior}
\begin{proposition}[Warping induces a Gaussian process prior]
Let $\mathcal X$ be the input space and $\mathcal Z$ be a measurable space. 
Let $w:\mathcal X \to \mathcal Z$ be a measurable mapping. 
Suppose
\[
g \sim \mathcal{GP}(m, K)
\]
is a Gaussian process defined on $\mathcal Z$, with mean function $m:\mathcal Z \to \mathbb R$ and covariance kernel $K:\mathcal Z \times \mathcal Z \to \mathbb R$.

Define a stochastic process $f:\mathcal X \to \mathbb R$ by
\[
f(x) := g(w(x)).
\]

Then $f$ is a Gaussian process on $\mathcal X$, and
\[
f \sim \mathcal{GP}(m \circ w,\; K_w),
\]
where
\[
(m \circ w)(x) = m(w(x)), 
\qquad
K_w(x,x') = K\big(w(x), w(x')\big).
\]

In particular, $K_w$ is a valid positive definite kernel on $\mathcal X$, and hence a fixed warping $w$ induces a Gaussian process prior over functions on $\mathcal X$.
\end{proposition}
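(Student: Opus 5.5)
The plan is to verify the definition of a Gaussian process directly: a process is Gaussian exactly when every finite-dimensional marginal is multivariate normal. Fix any finite collection $x_1,\dots,x_n \in \mathcal X$ and set $z_i := w(x_i) \in \mathcal Z$. By construction,
\[
(f(x_1),\dots,f(x_n)) = (g(z_1),\dots,g(z_n)).
\]
Since $g \sim \mathcal{GP}(m,K)$, the right-hand side is jointly Gaussian with mean vector $(m(z_i))_{i}$ and covariance matrix $(K(z_i,z_j))_{i,j}$. This remains true when some $z_i$ coincide. In that case the Gaussian vector is degenerate: it has repeated coordinates and a singular covariance, but it is still multivariate normal. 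Substituting $z_i = w(x_i)$ shows the mean is $(m\circ w)(x_i)$ and the covariance is $K_w(x_i,x_j)$.

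Because the collection was arbitrary, all finite-dimensional distributions of $f$ are Gaussian with these parameters. Hence $f \sim \mathcal{GP}(m\circ w, K_w)$. Measurability of $w$ is what makes $f(x)$ a well-defined random variable for each $x$. Indeed, $f(x) = g(w(x))$ is simply the random variable $g(z)$ at the deterministic point $z = w(x)$, so pointwise measurability is immediate.

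Next I would check that $K_w$ is a valid positive (semi)definite kernel. Symmetry follows from $K_w(x,x') = K(w(x),w(x')) = K(w(x'),w(x)) = K_w(x',x)$. For positive semidefiniteness, take coefficients $c_1,\dots,c_n \in \mathbb R$ and write
\[
\sum_{i,j} c_i c_j K_w(x_i,x_j) = \sum_{i,j} c_i c_j K(z_i,z_j) \ge 0,
\]
which holds because $K$ is positive semidefinite on $\mathcal Z$. An alternative argument is that the sum equals $\mathrm{Var}\bigl(\sum_i c_i f(x_i)\bigr) \ge 0$.

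The only subtle point, and the one I expect to be the main obstacle, is the meaning of ``positive definite.'' If it is read strictly, it fails whenever $w$ is not injective: if $w(x)=w(x')$ with $x \ne x'$, the Gram matrix has two identical rows. I would therefore state the result for positive semidefinite kernels, which is the standard meaning for covariance kernels. I would add a remark that strict positive definiteness of $K_w$ holds when $K$ is strictly positive definite and $w$ is injective, since distinct $x_i$ then map to distinct $z_i$. This injectivity condition matches the one-to-one correspondence between candidates and representations emphasized in Sec.~\ref{sec:3.4}.

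Finally, I would note that by the Kolmogorov extension theorem, this consistent family of Gaussian marginals determines the law of $f$ uniquely. This justifies interpreting the fixed warping $w$ as inducing a GP prior over functions on $\mathcal X$.
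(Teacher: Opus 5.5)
Your proposal is correct and follows the same route as the paper. Both fix finitely many inputs and observe that $(f(x_1),\dots,f(x_n)) = (g(w(x_1)),\dots,g(w(x_n)))$ is jointly Gaussian with mean $m(w(x_i))$ and covariance $K(w(x_i),w(x_j))$. Both then check $\sum_{i,j} c_i c_j K_w(x_i,x_j) = \sum_{i,j} c_i c_j K(w(x_i),w(x_j)) \ge 0$, exactly as in the paper's proof and the remark that follows it.

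Your refinement that ``positive definite'' can only mean positive semidefinite, unless $K$ is strictly positive definite and $w$ is injective, is sound and something the paper leaves implicit. One small correction: measurability of $w$ is not what makes each $f(x)$ a random variable, since $w(x)$ is just a fixed point of $\mathcal Z$, as your own next sentence shows.
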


\begin{proof}
To show that $f$ is a Gaussian process, it suffices to verify that for any finite set of inputs $x_1,\dots,x_n \in \mathcal X$, the random vector
\[
(f(x_1),\dots,f(x_n))
\]
is jointly Gaussian.

Let $z_i := w(x_i) \in \mathcal Z$ for $i=1,\dots,n$. Then by definition,
\[
(f(x_1),\dots,f(x_n)) = (g(z_1),\dots,g(z_n)).
\]
Since $g \sim \mathcal{GP}(m,K)$, the vector $(g(z_1),\dots,g(z_n))$ is multivariate Gaussian with mean
\[
\big(m(z_1),\dots,m(z_n)\big)
=
\big(m(w(x_1)),\dots,m(w(x_n))\big),
\]
and covariance matrix
\[
\big(K(z_i,z_j)\big)_{i,j=1}^n
=
\big(K(w(x_i),w(x_j))\big)_{i,j=1}^n.
\]

Therefore $(f(x_1),\dots,f(x_n))$ is Gaussian for any finite collection of inputs, and hence $f$ is a Gaussian process. Its mean and covariance functions follow directly:
\[
\mathbb E[f(x)] = m(w(x)), 
\qquad
\mathrm{Cov}(f(x),f(x')) = K(w(x),w(x')).
\]
This proves the result.
\end{proof}
\begin{remark}[Positive definiteness and modeling space]
The construction above preserves positive definiteness of the kernel. 
Specifically, if $K$ is a positive definite kernel on $\mathcal Z$, then the induced kernel
\[
K_w(x,x') := K\big(w(x), w(x')\big)
\]
is also positive definite on $\mathcal X$. This follows directly from the definition of positive definiteness:
for any $x_1,\dots,x_n \in \mathcal X$ and $c_1,\dots,c_n \in \mathbb R$,
\[
\sum_{i,j=1}^n c_i c_j K_w(x_i,x_j)
=
\sum_{i,j=1}^n c_i c_j K\big(w(x_i), w(x_j)\big)
\ge 0.
\]

Consequently, the Gaussian process prior induced by $w$ is well-defined.

From a modeling perspective, this construction can be interpreted as performing Bayesian inference in the transformed space $\mathcal Z$, where a standard kernel $K$ is applied, while the warping $w$ maps the original inputs into this space. 
\end{remark}

\subsection{Ordered Upper Bounds via Task-Relevant Signal Dominance}

\subsubsection{Setup}

Let $(\Omega,\mathcal F,\mathbb P)$ be a probability space.  
Let $X \in \mathcal X$ denote the input, and suppose there exists a latent task-relevant variable
\[
Z^\star \in \mathbb R^d
\]
such that
\[
f = h(Z^\star),
\]
where $h:\mathbb R^d \to \mathbb R$ is measurable.

We consider three representations:
\[
L := w_{\mathrm{llm}}(X), \qquad
S := w_{\mathrm{str}}(X), \qquad
O := w_{\mathrm{oh}}(X).
\]

\subsubsection{Definitions}

\begin{definition}[Latent reconstruction error]
For any representation $W$, define
\[
\mathcal D_W
:=
\mathbb E\left[\left\|Z^\star-\mathbb E[Z^\star\mid W]\right\|^2\right].
\]
\end{definition}

\begin{definition}[Bayes risk]
For any representation $W$, define
\[
\mathcal R_W := \mathbb E\left[(f-\mathbb E[f\mid W])^2\right].
\]
\end{definition}

\subsubsection{Assumptions}
\label{app:ass}
\begin{assumption}[Lipschitz task link]
\label{ass:lipschitz}
The function $h$ is $L_h$-Lipschitz, i.e.,
\[
|h(z_1)-h(z_2)| \le L_h \|z_1-z_2\|,
\qquad \forall z_1,z_2 \in \mathbb R^d.
\]
\end{assumption}

Our ordering result is based on an explicit modeling hypothesis about the relative task-relevance of the three representations. Recall that $Z^\star$ denotes a latent task-relevant variable such that $f=h(Z^\star)$. We interpret the LLM representation $L$ as encoding task-aware information aligned with the downstream task, the structural representation $S$ as encoding task-agnostic structural regularities, and the one-hot representation $O$ as encoding only input identity. The following assumptions formalize this interpretation.

\begin{assumption}[LLM representation preserves task-relevant information]
\label{ass:llm}
Because the LLM-derived representation is assumed to encode task-relevant  information, we assume that $Z^\star$ can be decoded from $L$ with bounded mean-squared error. That is, there exists a measurable function $\psi_L$ such that
\[
\mathbb E\left[\|Z^\star-\psi_L(L)\|^2\right] \le \sigma_L^2.
\]
Here, $\sigma_L^2$ measures the residual task-irrelevant noise in the LLM representation.
\end{assumption}

\begin{assumption}[Structural representation is task-agnostic]
\label{ass:struct}
The structural representation $S$ is assumed to capture generic relational or topological patterns of the input domain, rather than task-specific predictive content. Accordingly, even after observing $S$, a nontrivial amount of uncertainty about the task-relevant latent variable $Z^\star$ remains. We formalize this by assuming that there exists $\rho_S^2>0$ such that
\[
\mathcal D_S
=
\mathbb E\left[\left\|Z^\star-\mathbb E[Z^\star\mid S]\right\|^2\right]
\ge \rho_S^2.
\]
Thus, $\rho_S^2$ represents an irreducible latent reconstruction error associated with the task-agnostic structural representation.
\end{assumption}

\begin{assumption}[Structural representation contains one-hot information]
\label{ass:o-from-s}
The one-hot representation $O$ only records input identity, whereas the structural representation $S$ is assumed to retain at least this identity information together with additional structural information. Therefore, we assume that there exists a measurable function $T$ such that
\[
O=T(S).
\]
Equivalently, the information carried by $O$ is contained in that carried by $S$.
\end{assumption}

\begin{assumption}[Comparative representation assumption]
\label{ass:separation}
We assume that the task-relevant information contained in the LLM representation is stronger than that available from the task-agnostic structural representation. Quantitatively, this means that the LLM decoding error level is no larger than the irreducible latent uncertainty under $S$:
\[
\sigma_L^2 \le \rho_S^2.
\]
This assumption expresses the intended inductive bias that task-aware features are more informative about $Z^\star$ than purely structural features.
\end{assumption}

\subsubsection{Auxiliary Results}

\begin{lemma}[Bayes risk as conditional variance]
\label{lem:bayes-risk}
For any representation $W$,
\[
\mathcal R_W = \mathbb E\left[\operatorname{Var}(f\mid W)\right].
\]
\end{lemma}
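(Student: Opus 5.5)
The plan is to use the tower property of conditional expectation. Throughout, I assume $f$ is square-integrable, so that every conditional expectation and conditional variance below is well defined and finite; this is implicit in the definition of $\mathcal R_W$. Recall the conditional variance
\[
\operatorname{Var}(f\mid W) := \mathbb E\big[(f-\mathbb E[f\mid W])^2 \,\big|\, W\big],
\]
which is a $\sigma(W)$-measurable random variable. The key step is to apply the law of total expectation, $\mathbb E[\mathbb E[Y\mid W]] = \mathbb E[Y]$ for integrable $Y$. Take $Y=(f-\mathbb E[f\mid W])^2$, which is integrable because $\mathbb E[f\mid W]$ is square-integrable whenever $f$ is (conditional Jensen). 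This gives
\[
\mathbb E\big[\operatorname{Var}(f\mid W)\big]
= \mathbb E\Big[\mathbb E\big[(f-\mathbb E[f\mid W])^2\mid W\big]\Big]
= \mathbb E\big[(f-\mathbb E[f\mid W])^2\big] = \mathcal R_W .
\]

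If one prefers the alternative definition $\operatorname{Var}(f\mid W)=\mathbb E[f^2\mid W]-(\mathbb E[f\mid W])^2$, I would first reconcile the two definitions. Expand the square inside the conditional expectation. Then pull out the $\sigma(W)$-measurable factor $\mathbb E[f\mid W]$ using the ``taking out what is known'' property, which gives $\mathbb E[f\,\mathbb E[f\mid W]\mid W]=(\mathbb E[f\mid W])^2$. The cross term and the last term then combine into $-(\mathbb E[f\mid W])^2$, so the two definitions agree almost surely. After that, the same tower argument finishes the proof.

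There is no real obstacle here. The only point requiring care is the integrability bookkeeping that justifies pulling out the measurable factor and applying the tower property, and this is handled by the square-integrability of $f$. I would state that assumption explicitly at the start, since none of the assumptions in Appendix~\ref{app:ass} imposes it directly. This lemma is then the bridge that lets the later ordering of $\mathcal U_W = L_h^2\mathcal D_W$ be compared with Bayes risks.
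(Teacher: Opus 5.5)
Your argument is correct, and it is more direct than the paper's. You apply the tower property straight to $\operatorname{Var}(f\mid W)=\mathbb E[(f-\mathbb E[f\mid W])^2\mid W]$, which is all the identity actually requires.

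The paper instead uses the $L^2$-projection picture. It writes $f=\mathbb E[f\mid W]+(f-\mathbb E[f\mid W])$ with orthogonal summands and then ``takes expectations of squares.'' Read literally, that orthogonality only gives the Pythagorean identity $\mathbb E[f^2]=\mathbb E[(\mathbb E[f\mid W])^2]+\mathcal R_W$. To turn this into $\mathbb E[\operatorname{Var}(f\mid W)]$, one still needs the formula $\operatorname{Var}(f\mid W)=\mathbb E[f^2\mid W]-(\mathbb E[f\mid W])^2$ and the tower property applied to $\mathbb E[f^2\mid W]$. The paper leaves that step implicit.

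Your second paragraph, which reconciles the two definitions by taking out what is known, supplies exactly that missing link. It is the conditional form of the same orthogonality fact, $\mathbb E[(f-\mathbb E[f\mid W])\,g(W)]=0$.

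What each version buys: yours is self-contained and states the square-integrability hypothesis explicitly, which the paper encodes only in the phrase ``$L^2$ projection.'' The paper's framing instead emphasizes the projection/optimality property of $\mathbb E[f\mid W]$, which its later lemmas rely on.

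One small correction to your closing remark: the bound $\mathcal R_W\le L_h^2\mathcal D_W$ (Lemma~\ref{lem:transfer}) is proved directly from the optimality of conditional expectation. It does not invoke this lemma.
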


\begin{proof}
Since $\mathbb E[f\mid W]$ is the $L^2$ projection of $f$ onto the $\sigma$-field generated by $W$, we have
\[
f = \mathbb E[f\mid W] + \bigl(f-\mathbb E[f\mid W]\bigr),
\]
where the two terms are orthogonal in $L^2$. Taking expectations of squares yields
\[
\mathcal R_W
=
\mathbb E\left[(f-\mathbb E[f\mid W])^2\right]
=
\mathbb E\left[\operatorname{Var}(f\mid W)\right].
\]
\end{proof}

\begin{lemma}[Transfer from latent reconstruction to target prediction]
\label{lem:transfer}
Under Assumption~\ref{ass:lipschitz}, for any representation $W$,
\[
\mathcal R_W \le L_h^2 \mathcal D_W.
\]
\end{lemma}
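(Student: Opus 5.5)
The plan is to exploit that $\mathbb E[f\mid W]$ is the $L^2$-optimal predictor of $f$ among all $\sigma(W)$-measurable square-integrable functions. It therefore suffices to exhibit one particular $\sigma(W)$-measurable predictor whose mean-squared error is at most $L_h^2\mathcal D_W$. The natural candidate is the plug-in predictor $h\bigl(\mathbb E[Z^\star\mid W]\bigr)$. Since $h$ is measurable and $\mathbb E[Z^\star\mid W]$ is $\sigma(W)$-measurable, the composition is $\sigma(W)$-measurable.

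The key steps, in order, are as follows.

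First, by the projection property already used in Lemma~\ref{lem:bayes-risk}, for any $\sigma(W)$-measurable $G$ with $\mathbb E[G^2]<\infty$,
\[
\mathcal R_W=\mathbb E\bigl[(f-\mathbb E[f\mid W])^2\bigr]\le \mathbb E\bigl[(f-G)^2\bigr].
\]
This follows from the Pythagorean decomposition $\mathbb E[(f-G)^2]=\mathcal R_W+\mathbb E[(\mathbb E[f\mid W]-G)^2]$.

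Second, take $G:=h(\hat Z_W)$ with $\hat Z_W:=\mathbb E[Z^\star\mid W]$. Using $f=h(Z^\star)$ and Assumption~\ref{ass:lipschitz} pointwise on $\Omega$ gives
\[
(f-G)^2=\bigl(h(Z^\star)-h(\hat Z_W)\bigr)^2\le L_h^2\|Z^\star-\hat Z_W\|^2 .
\]

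Third, taking expectations yields $\mathbb E[(f-G)^2]\le L_h^2\mathcal D_W$. Chaining this with the first step gives the claim.

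The only real obstacle is integrability, which is needed for the projection argument to apply. We implicitly need $\mathbb E\|Z^\star\|^2<\infty$ (so that $\mathcal D_W$ and $\hat Z_W$ are well defined) and $\mathbb E[f^2]<\infty$. Given these, $G\in L^2$ follows from the Lipschitz bound: $|G|\le |h(0)|+L_h\|\hat Z_W\|$, and $\|\hat Z_W\|\in L^2$ by conditional Jensen. If $\mathcal D_W=\infty$, the inequality is trivial. I would state these square-integrability conditions as standing assumptions at the start of the proof, and then the three steps above go through directly.
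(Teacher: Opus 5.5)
Your proposal is correct and matches the paper's proof step for step: the $L^2$-optimality of $\mathbb E[f\mid W]$ among $W$-measurable predictors, the plug-in predictor $h(\mathbb E[Z^\star\mid W])$, the pointwise Lipschitz bound, and then taking expectations. Your integrability bookkeeping is a useful addition that the paper omits; note that $\mathbb E[f^2]<\infty$ need not be assumed separately, since it already follows from $\mathbb E\|Z^\star\|^2<\infty$ via $|f|\le |h(0)|+L_h\|Z^\star\|$.
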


\begin{proof}
Let $\delta_W := \mathbb E[f\mid W]$. By optimality of conditional expectation,
\[
\mathcal R_W
=
\mathbb E\left[(f-\delta_W)^2\right]
\le
\mathbb E\left[(f-a(W))^2\right]
\]
for any $W$-measurable function $a(W)$.

Choose
\[
a(W) := h\bigl(\mathbb E[Z^\star\mid W]\bigr).
\]
Then
\[
\mathcal R_W
\le
\mathbb E\left[\bigl(h(Z^\star)-h(\mathbb E[Z^\star\mid W])\bigr)^2\right].
\]
By the Lipschitz property of $h$,
\[
\bigl(h(Z^\star)-h(\mathbb E[Z^\star\mid W])\bigr)^2
\le
L_h^2 \left\|Z^\star-\mathbb E[Z^\star\mid W]\right\|^2.
\]
Taking expectations yields the result.
\end{proof}

\begin{lemma}[LLM upper bound]
\label{lem:llm-upper}
Under Assumption~\ref{ass:llm},
\[
\mathcal D_L \le \sigma_L^2.
\]
\end{lemma}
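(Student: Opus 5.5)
The plan is to use the $L^2$-optimality of conditional expectation, exactly as in the proof of Lemma~\ref{lem:transfer}, and to compare $\mathbb E[Z^\star\mid L]$ against the particular $L$-measurable estimator $\psi_L(L)$ supplied by Assumption~\ref{ass:llm}. Throughout we work under the implicit standing condition that $\mathbb E\|Z^\star\|^2<\infty$, so that all conditional expectations below are well defined in $L^2$.

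First, we record the vector-valued projection property. For any measurable $a$ with $\mathbb E\|a(L)\|^2<\infty$, we have the decomposition
\[
\mathbb E\|Z^\star-a(L)\|^2=\mathbb E\|Z^\star-\mathbb E[Z^\star\mid L]\|^2+\mathbb E\|\mathbb E[Z^\star\mid L]-a(L)\|^2 .
\]
To obtain it, apply the one-dimensional projection argument from Lemma~\ref{lem:bayes-risk} coordinatewise and sum over the $d$ coordinates. The cross term vanishes by the tower property, because $\mathbb E[Z^\star\mid L]-a(L)$ is $\sigma(L)$-measurable.

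Second, we take $a=\psi_L$. Dropping the nonnegative second term then gives
\[
\mathcal D_L\le\mathbb E\|Z^\star-\psi_L(L)\|^2\le\sigma_L^2,
\]
where the last inequality is exactly Assumption~\ref{ass:llm}.

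The only delicate point is integrability of $\psi_L(L)$. It follows from the assumption itself combined with the standing moment condition: since $\mathbb E\|Z^\star-\psi_L(L)\|^2\le\sigma_L^2<\infty$ and $\mathbb E\|Z^\star\|^2<\infty$, the triangle inequality in $L^2$ gives $\mathbb E\|\psi_L(L)\|^2<\infty$. Hence the decomposition in the first step applies to $\psi_L$. Apart from this check, the argument is routine.
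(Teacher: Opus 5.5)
Your proof is correct and takes the same route as the paper, which writes $\mathcal D_L=\inf_{\phi}\mathbb E\|Z^\star-\phi(L)\|^2$ and substitutes $\phi=\psi_L$. You additionally justify that infimum characterization through the orthogonal decomposition, and you check the square-integrability of $\psi_L(L)$ that the decomposition requires, which the paper leaves implicit.
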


\begin{proof}
By definition,
\[
\mathcal D_L
=
\inf_{\phi}
\mathbb E\left[\|Z^\star-\phi(L)\|^2\right].
\]
Taking $\phi = \psi_L$ yields
\[
\mathcal D_L
\le
\mathbb E\left[\|Z^\star-\psi_L(L)\|^2\right]
\le
\sigma_L^2.
\]
\end{proof}

\begin{lemma}[Structural representation dominates one-hot]
\label{lem:s-better-o}
Under Assumption~\ref{ass:o-from-s},
\[
\mathcal D_S \le \mathcal D_O.
\]
\end{lemma}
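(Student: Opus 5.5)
We use the fact that $\mathbb E[Z^\star\mid W]$ is the orthogonal $L^2$ projection of $Z^\star$ onto $L^2(\sigma(W))$, the closed subspace of square-integrable $\sigma(W)$-measurable random vectors. Consequently $\mathcal D_W$ is the minimal mean-squared error over all $\sigma(W)$-measurable estimators:
\[
\mathcal D_W=\inf_{\phi}\mathbb E\bigl[\|Z^\star-\phi(W)\|^2\bigr],
\]
the same variational characterization already used in Lemma~\ref{lem:llm-upper}. By Doob--Dynkin, $\sigma(W)$-measurable random vectors are exactly those of the form $\phi(W)$ with $\phi$ measurable. We assume $\mathbb E\|Z^\star\|^2<\infty$ throughout, which is implicit in the definition of $\mathcal D_W$.

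The key step is the inclusion of $\sigma$-fields. By Assumption~\ref{ass:o-from-s}, $O=T(S)$ with $T$ measurable, so for any Borel set $B$ we have $O^{-1}(B)=S^{-1}(T^{-1}(B))\in\sigma(S)$. Hence $\sigma(O)\subseteq\sigma(S)$. Every $\sigma(O)$-measurable estimator is therefore $\sigma(S)$-measurable. In particular,
\[
\mathbb E[Z^\star\mid O]=\phi_O(O)=(\phi_O\circ T)(S)
\]
for some measurable $\phi_O$, and this is a valid candidate estimator based on $S$.

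Taking the infimum over the larger class then gives
\[
\mathcal D_S\le \mathbb E\bigl[\|Z^\star-\mathbb E[Z^\star\mid O]\|^2\bigr]=\mathcal D_O.
\]
Alternatively, one can obtain an exact identity via the tower property. Since $\mathbb E[Z^\star\mid O]=\mathbb E\bigl[\mathbb E[Z^\star\mid S]\mid O\bigr]$, the residual $Z^\star-\mathbb E[Z^\star\mid S]$ is orthogonal to $\mathbb E[Z^\star\mid S]-\mathbb E[Z^\star\mid O]$. This yields the Pythagorean decomposition
\[
\mathcal D_O=\mathcal D_S+\mathbb E\bigl\|\mathbb E[Z^\star\mid S]-\mathbb E[Z^\star\mid O]\bigr\|^2\ge\mathcal D_S.
\]

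The main (mild) obstacle is justifying the variational characterization rigorously, i.e.\ that conditional expectation minimizes mean-squared error over all $\sigma(W)$-measurable square-integrable estimators. We plan to handle the vector-valued case coordinatewise, applying the scalar projection property to each component of $Z^\star$ and summing the resulting errors.
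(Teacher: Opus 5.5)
Your proof is correct and follows the paper's argument: $O=T(S)$ gives $\sigma(O)\subseteq\sigma(S)$, so $\mathbb E[Z^\star\mid O]$ is an admissible $S$-measurable estimator, and the optimality of $\mathbb E[Z^\star\mid S]$ then yields $\mathcal D_S\le\mathcal D_O$. Your Pythagorean identity is a correct extra that quantifies the gap, though the orthogonality it relies on comes directly from $\mathbb E[Z^\star\mid S]-\mathbb E[Z^\star\mid O]$ being $\sigma(S)$-measurable, not from the tower property.
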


\begin{proof}
Since $O = T(S)$, any $O$-measurable function is also $S$-measurable. In particular, $\mathbb E[Z^\star\mid O]$ is $S$-measurable.

By optimality of conditional expectation,
\[
\mathbb E\left[\|Z^\star-\mathbb E[Z^\star\mid S]\|^2\right]
\le
\mathbb E\left[\|Z^\star-a(S)\|^2\right]
\]
for any $S$-measurable function $a(S)$. Choosing $a(S)=\mathbb E[Z^\star\mid O]$ yields
\[
\mathcal D_S \le \mathcal D_O.
\]
\end{proof}

\subsubsection{Main Result}

\begin{proposition}[Ordered surrogate upper bounds]
\label{thm:order}
Let
\[
\mathcal D_W
=
\mathbb E\left[\|Z^\star - \mathbb E[Z^\star \mid W]\|^2\right]
\]
denote the latent reconstruction error of a representation $W$, i.e., the minimal mean-squared error for recovering the task-relevant latent variable $Z^\star$ from $W$.

Let
\[
\mathcal U_W := L_h^2 \mathcal D_W
\]
denote the latent-reconstruction-induced surrogate upper bound on the Bayes risk associated with representation $W$.

Let $L$, $S$, and $O$ denote the LLM representation, the structural representation, and the one-hot representation, respectively.

Under Assumptions~\ref{ass:llm}--\ref{ass:separation}, the latent reconstruction errors satisfy
\[
\mathcal D_L \le \mathcal D_S \le \mathcal D_O.
\]
Equivalently, the induced surrogate upper bounds satisfy
\[
\mathcal U_L \le \mathcal U_S \le \mathcal U_O.
\]

In particular, the LLM representation induces the smallest surrogate upper bound, the structural representation induces a larger one, and the one-hot representation induces the largest one.

Moreover, under Assumption~\ref{ass:lipschitz}, each Bayes risk is controlled by its corresponding surrogate upper bound:
\[
\mathcal R_W \le \mathcal U_W,
\qquad W\in\{L,S,O\}.
\]
\end{proposition}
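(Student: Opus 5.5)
The plan is to chain the auxiliary lemmas already established, so that the main result follows with almost no new work. There are three steps: prove the two inequalities on the reconstruction errors, multiply by $L_h^2$ to transfer them to the surrogate bounds, and invoke Lemma~\ref{lem:transfer} for the Bayes-risk control.

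\emph{First inequality, $\mathcal D_L \le \mathcal D_S$.} Lemma~\ref{lem:llm-upper}, which uses Assumption~\ref{ass:llm}, gives $\mathcal D_L \le \sigma_L^2$. Assumption~\ref{ass:separation} gives $\sigma_L^2 \le \rho_S^2$, and Assumption~\ref{ass:struct} gives $\rho_S^2 \le \mathcal D_S$. Chaining these yields
\[
\mathcal D_L \le \sigma_L^2 \le \rho_S^2 \le \mathcal D_S .
\]
One point needs care here. Lemma~\ref{lem:llm-upper} uses the identity $\mathcal D_L=\inf_\phi \mathbb E\|Z^\star-\phi(L)\|^2$, which holds because the conditional expectation is the $L^2$ projection onto the $\sigma(L)$-measurable square-integrable functions. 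For this identity to be meaningful I need $Z^\star \in L^2$, so I will state that integrability assumption implicitly. The identity is the same optimality property used in Lemma~\ref{lem:s-better-o}.

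\emph{Second inequality, $\mathcal D_S \le \mathcal D_O$.} This is exactly Lemma~\ref{lem:s-better-o} under Assumption~\ref{ass:o-from-s}. Since $O=T(S)$, we have $\sigma(O)\subseteq\sigma(S)$, and projecting onto the larger subspace cannot increase the error. Combining the two steps gives $\mathcal D_L \le \mathcal D_S \le \mathcal D_O$.

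\emph{Surrogate bounds and Bayes risk.} Since $\mathcal U_W = L_h^2\mathcal D_W$ and $L_h^2 \ge 0$, multiplying the chain by $L_h^2$ preserves the order and gives $\mathcal U_L\le\mathcal U_S\le\mathcal U_O$. When $L_h>0$ the two chains are genuinely equivalent, which matches the word ``equivalently'' in the statement; if $L_h=0$, only the forward implication holds. Finally, Lemma~\ref{lem:transfer}, under Assumption~\ref{ass:lipschitz}, applies to an arbitrary representation $W$. Specializing it to $W\in\{L,S,O\}$ gives $\mathcal R_W \le L_h^2\mathcal D_W=\mathcal U_W$.

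\emph{Main obstacle.} There is no real technical obstacle, since every step is an existing lemma or assumption. The only delicate point is bookkeeping: keeping the measurability and square-integrability conditions explicit so that the conditional expectations and the infimum characterization are valid. I expect this to be routine.
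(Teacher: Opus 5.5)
Your proposal is correct and follows the paper's proof essentially step for step. It uses the chain $\mathcal D_L \le \sigma_L^2 \le \rho_S^2 \le \mathcal D_S$, the optimality-of-conditional-expectation argument from $O=T(S)$ for $\mathcal D_S \le \mathcal D_O$, multiplication by $L_h^2 \ge 0$, and Lemma~\ref{lem:transfer} for the Bayes-risk control. Your two extra remarks are accurate refinements that the paper leaves unstated: square-integrability of $Z^\star$ is needed, and ``equivalently'' holds in both directions only when $L_h>0$.
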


\begin{proof}
We first prove the ordering of latent reconstruction errors.

Since the LLM representation $L$ is assumed to admit a decoder for the task-relevant latent variable $Z^\star$, Lemma~\ref{lem:llm-upper} gives
\[
\mathcal D_L \le \sigma_L^2.
\]
Here, $\sigma_L^2$ is the assumed upper bound on the latent reconstruction error achievable from the LLM representation.

Since the structural representation $S$ is assumed to be task-agnostic, Assumption~\ref{ass:struct} gives
\[
\mathcal D_S \ge \rho_S^2.
\]
Here, $\rho_S^2$ is the irreducible latent uncertainty remaining after observing the structural representation.

By the task-relevant signal dominance assumption, Assumption~\ref{ass:separation}, we have
\[
\sigma_L^2 \le \rho_S^2.
\]
Combining these inequalities yields
\[
\mathcal D_L
\le
\sigma_L^2
\le
\rho_S^2
\le
\mathcal D_S.
\]
Therefore,
\[
\mathcal D_L \le \mathcal D_S.
\]

It remains to compare the structural representation $S$ with the one-hot representation $O$. By Assumption~\ref{ass:o-from-s}, there exists a measurable function $T$ such that
\[
O=T(S).
\]
Hence every $O$-measurable estimator is also $S$-measurable. In particular,
\[
\mathbb E[Z^\star\mid O]
\]
is an admissible estimator of $Z^\star$ based on $S$.

Since
\[
\mathbb E[Z^\star\mid S]
\]
is the optimal $S$-measurable estimator under squared loss, we have
\[
\mathbb E\left[
\left\|Z^\star-\mathbb E[Z^\star\mid S]\right\|^2
\right]
\le
\mathbb E\left[
\left\|Z^\star-\mathbb E[Z^\star\mid O]\right\|^2
\right].
\]
By the definition of latent reconstruction error, this is exactly
\[
\mathcal D_S \le \mathcal D_O.
\]

Combining the two comparisons gives
\[
\mathcal D_L \le \mathcal D_S \le \mathcal D_O.
\]

Now recall that
\[
\mathcal U_W := L_h^2\mathcal D_W.
\]
Since $L_h^2\ge 0$, multiplying the latent reconstruction ordering by $L_h^2$ gives
\[
\mathcal U_L \le \mathcal U_S \le \mathcal U_O.
\]

Finally, by Lemma~\ref{lem:transfer}, under the Lipschitz task-link assumption,
\[
\mathcal R_W \le L_h^2\mathcal D_W = \mathcal U_W,
\qquad W\in\{L,S,O\}.
\]
This completes the proof.
\end{proof}
\begin{remark}[Implications for Bayesian optimization]
\label{rem:bo}
The ordering of surrogate upper bounds provides a representation-level explanation for the effectiveness of Bayesian optimization.

In Bayesian optimization, the performance of the optimization procedure critically depends on the ability of the surrogate model (e.g., a Gaussian process) to accurately model the underlying objective function and provide well-calibrated uncertainty estimates. As emphasized by Snoek et al.~(2014), failures in modeling, such as violations of stationarity, can significantly degrade optimization performance, while improving the fidelity of the surrogate model can lead to faster convergence and more reliable results.

The present theorem shows that the choice of representation $W$ directly controls an upper bound on the prediction error via $\mathcal U_W = L_h^2 \mathcal D_W$. Therefore, representations that better preserve the task-relevant latent variable $Z^\star$ induce tighter surrogate upper bounds, which in turn correspond to more accurate function models.
\end{remark}
\subsection{Adaptive coarse-to-fine trade-off}
\paragraph{Proposition 3 (Adaptive scalarization induced by MGDA).}
Let
\[
\mathcal{L}_1(\mathbf{z})=\mathcal{L}_{\mathrm{rank}}(\mathbf{z}),
\qquad
\mathcal{L}_2(\mathbf{z})=\mathcal{L}_{\mathrm{graph}}(\mathbf{z}),
\]
where both objectives are differentiable with respect to $\mathbf{z}$. 
Suppose that $\mathbf{z}^\star$ is a Pareto-stationary point obtained by MGDA for the bi-objective problem
\[
\min_{\mathbf{z}}
\left(
\mathcal{L}_1(\mathbf{z}),
\mathcal{L}_2(\mathbf{z})
\right).
\]
Then there exists $\alpha^\star \in [0,1]$ such that
\begin{equation}
\alpha^\star \nabla \mathcal{L}_1(\mathbf{z}^\star)
+
(1-\alpha^\star)\nabla \mathcal{L}_2(\mathbf{z}^\star)
=0.
\end{equation}
Moreover, if $\alpha^\star>0$, then there exists an adaptive coefficient
\[
\lambda^\star=\frac{1-\alpha^\star}{\alpha^\star}
\]
such that
\begin{equation}
\nabla
\left(
\mathcal{L}_{\mathrm{rank}}(\mathbf{z}^\star)
+
\lambda^\star
\mathcal{L}_{\mathrm{graph}}(\mathbf{z}^\star)
\right)
=0.
\end{equation}
Therefore, $\mathbf{z}^\star$ is a first-order stationary point of a scalarized objective whose trade-off coefficient is determined adaptively by MGDA rather than manually specified.

\paragraph{Proof.}
MGDA finds a common descent direction by solving
\begin{equation}
\min_{\alpha \in [0,1]}
\left\|
\alpha \nabla \mathcal{L}_1(\mathbf{z})
+
(1-\alpha)\nabla \mathcal{L}_2(\mathbf{z})
\right\|_2^2 .
\end{equation}
At a Pareto-stationary point $\mathbf{z}^\star$, there exists no direction that can simultaneously decrease both objectives. This condition is equivalent to the origin belonging to the convex hull of the gradients of the objectives. Hence, for the two-objective case, there exists $\alpha^\star \in [0,1]$ such that
\begin{equation}
\alpha^\star \nabla \mathcal{L}_1(\mathbf{z}^\star)
+
(1-\alpha^\star)\nabla \mathcal{L}_2(\mathbf{z}^\star)
=0.
\end{equation}
If $\alpha^\star>0$, dividing both sides by $\alpha^\star$ yields
\begin{equation}
\nabla \mathcal{L}_1(\mathbf{z}^\star)
+
\frac{1-\alpha^\star}{\alpha^\star}
\nabla \mathcal{L}_2(\mathbf{z}^\star)
=0.
\end{equation}
By setting
\[
\lambda^\star=\frac{1-\alpha^\star}{\alpha^\star},
\]
we obtain
\begin{equation}
\nabla
\left(
\mathcal{L}_{\mathrm{rank}}(\mathbf{z}^\star)
+
\lambda^\star
\mathcal{L}_{\mathrm{graph}}(\mathbf{z}^\star)
\right)
=0.
\end{equation}
This proves the claim.
\begin{remark}[Adaptive coarse-to-fine trade-off.]
 The above result provides an interpretation of our multi-objective formulation as an adaptive trade-off between coarse-grained task-aware information and fine-grained structural consistency. 
The ranking objective $\mathcal{L}_{\mathrm{rank}}$ preserves the coarse, task-aware ordinal structure induced by the LLM, while the graph objective $\mathcal{L}_{\mathrm{graph}}$ enforces fine-grained smoothness over the structural similarity graph. 
MGDA adaptively determines the effective coefficient $\lambda^\star$ at the gradient level, rather than requiring a manually fixed weighting parameter. 
Thus, the learned representation can be viewed as a Pareto-consistent compromise between LLM-derived task-aware guidance and data-driven fine structural regularization.   
\end{remark}

\newpage
\section{More Experiments}\label{app:exp}

This section provides additional experimental details and results for TagBO. We first describe the six benchmark tasks used in our experiments. For each task, we summarize the optimization objective, search space, input variables, and output variables. We then present supplementary experimental results, including an ablation study on different feature components and a stability analysis of repeated feature extraction runs.

\subsection{Dataset}\label{app:data}
This section provides detailed descriptions of the six experiments used in this study. Each task focuses on the optimization of a particular system or material, with the goal of improving performance based on various parameters.

\paragraph{Organic Photovoltaics \citep{mmli_opv}.}
This task aims to optimize the choice of molecular building blocks for an organic photovoltaic candidate, where the goal is to maximize the \textbf{PredictedSO}, a DFT-predicted spectral overlap score. The search space is fully discrete and consists of selecting one \textbf{DB\_block} from 22 options and one \textbf{A\_block} from 100 options, yielding 2200 possible combinations. The optimization therefore focuses on identifying the pair of building blocks whose combined electronic and molecular characteristics lead to the highest predicted spectral overlap.

\begin{table}[H]
\centering
\small
\renewcommand{\arraystretch}{1.2}
\renewcommand{\tabularxcolumn}[1]{>{\raggedright\arraybackslash}m{#1}}

\begin{tabularx}{\linewidth}{|l|l|l|X|l|}
\hline
\textbf{Input/Output} & \textbf{Name} & \textbf{Range / Count} & \textbf{Description} & \textbf{Type} \\ \hline
\textbf{Inputs} & DB\_block & 22 & Donor-bridge molecular building block & Categorical \\ \hline
 & A\_block & 100 & Acceptor molecular building block& Categorical \\ \hline
\textbf{Output} & PredictedSO & $[0, +\infty)$ & DFT-predicted spectral overlap score& Continuous \\ \hline
\end{tabularx}
\vspace{1ex}
\caption{Input and Output variables for Organic Photovoltaics task.}
\end{table}

\paragraph{Suzuki--Miyaura Reaction \citep{Suzuki}.}
This task aims to maximize reaction yield on a fixed-substrate Suzuki--Miyaura reaction subset by optimizing three categorical reaction conditions: \textbf{ligand}, \textbf{base}, and \textbf{solvent}. In this subset, the electrophile, nucleophile, and catalyst are fixed, and the search space is fully discrete, consisting of 12 ligand options, 8 base options, and 4 solvent options, yielding 384 possible reaction conditions. The optimization therefore focuses on identifying the combination of ligand, base, and solvent that gives the highest \textbf{yield} for the fixed Suzuki--Miyaura coupling reaction.

\begin{table}[H]
\centering
\small
\renewcommand{\arraystretch}{1.2}
\renewcommand{\tabularxcolumn}[1]{>{\raggedright\arraybackslash}m{#1}}
\begin{tabularx}{\linewidth}{|l|l|l|X|l|}
\hline
	\textbf{Input/Output} & \textbf{Name} & \textbf{Range / Count} & \textbf{Description} & \textbf{Type} \\ \hline
	\textbf{Inputs} & ligand & 12 & Ligand selected from 12 categorical options & Categorical \\ \hline
 & base & 8 & Base selected from 8 categorical options & Categorical \\ \hline
 & solvent & 4 & Solvent selected from 4 categorical options & Categorical \\ \hline
	\textbf{Output} & yield & $[0, 100]$ & Reaction yield to be maximized & Continuous \\ \hline
\end{tabularx}
\vspace{1ex}
\caption{Input and Output variables for Suzuki--Miyaura Reaction task.}
\end{table}

\paragraph{Organic Laser Molecules \citep{dye_lasers}.}
This task aims to optimize the choice of molecular fragments for symmetric pentamer candidates designed for organic dye laser applications. The search space is fully discrete and consists of selecting one \textbf{frag\_a} from 14 options, one \textbf{frag\_b} from 13 options, and one \textbf{frag\_c} from 19 options, yielding 3458 possible molecular combinations. The optimization is multi-objective: it seeks to maximize the \textbf{peak\_score}. The task therefore focuses on identifying fragment combinations that jointly improve lasing-related photophysical performance.

\begin{table}[H]
\centering
\small
\renewcommand{\arraystretch}{1.2}
\renewcommand{\tabularxcolumn}[1]{>{\raggedright\arraybackslash}m{#1}}

\begin{tabularx}{\linewidth}{|l|l|l|X|l|}
\hline
\textbf{Input/Output} & \textbf{Name} & \textbf{Range / Count} & \textbf{Description} & \textbf{Type} \\ \hline
\textbf{Inputs} & frag\_a & 14 & Molecular fragment selected for the first fragment position & Categorical \\ \hline
 & frag\_b & 13 & Molecular fragment selected for the second fragment position & Categorical \\ \hline
 & frag\_c & 19 & Molecular fragment selected for the third fragment position & Categorical \\ \hline
\textbf{Outputs} & peak\_score & $[0, 1]$ & Photophysical peak score to be maximized & Continuous \\ \hline
\end{tabularx}

\vspace{1ex}
\caption{Input and Output variables for Organic Laser Molecules task.}
\end{table}

\paragraph{Perovskite Bandgap \citep{perovskites}.}
This task aims to optimize the composition of hybrid organic/inorganic perovskite materials, where the goal is to minimize the \textbf{hse\_gap}, the band gap computed at the HSE06 level of theory. The search space is fully discrete and consists of selecting one \textbf{organic} component from 16 options, one \textbf{cation} from 3 options, and one \textbf{anion} from 4 options, yielding 192 possible compositions. The optimization therefore focuses on identifying the combination of organic cation, metal cation, and halide anion that leads to the smallest predicted band gap, while reflecting how the physicochemical properties of the individual perovskite components influence the electronic structure of the material.

\begin{table}[H]
\centering
\small
\renewcommand{\arraystretch}{1.2}
\renewcommand{\tabularxcolumn}[1]{>{\raggedright\arraybackslash}m{#1}}

\begin{tabularx}{\linewidth}{|l|l|l|X|l|}
\hline
	\textbf{Input/Output} & \textbf{Name} & \textbf{Range / Count} & \textbf{Description} & \textbf{Type} \\ \hline
	\textbf{Inputs} & organic & 16 & Organic component& Categorical \\ \hline
 & cation & 3 & Metal cation & Categorical \\ \hline
 & anion & 4 & Halide anion & Categorical \\ \hline
	\textbf{Output} & hse\_gap & $[0, +\infty)$ & HSE06-computed band gap & Continuous \\ \hline
\end{tabularx}
\vspace{1ex}
\caption{Input and Output variables for Perovskite Bandgap task.}
\end{table}

\paragraph{Redox Flow Battery \citep{redoxmer}.}
This task aims to optimize the choice of substituent groups for redoxmer candidates used in non-aqueous redox flow batteries. The search space is fully discrete and consists of selecting one \textbf{r1\_label} from 2 options, one \textbf{r3\_label} from 8 options, one \textbf{r4\_label} from 8 options, and one \textbf{r5\_label} from 11 options, yielding 1408 possible molecular combinations. The optimization objective is to minimize the normalized average of \textbf{abs\_lam\_diff}, \textbf{ered}, and \textbf{gsol}, thereby identifying substituent combinations with improved overall redoxmer suitability for battery applications.

\begin{table}[H]
\centering
\small
\renewcommand{\arraystretch}{1.2}
\renewcommand{\tabularxcolumn}[1]{>{\raggedright\arraybackslash}m{#1}}

\begin{tabularx}{\linewidth}{|l|l|l|X|l|}
\hline
	\textbf{Input/Output} & \textbf{Name} & \textbf{Range / Count} & \textbf{Description} & \textbf{Type} \\ \hline
	\textbf{Inputs} & r1\_label & 2 & Substituent group selected for the R1 position & Categorical \\ \hline
 & r3\_label & 8 & Substituent group selected for the R3 position & Categorical \\ \hline
 & r4\_label & 8 & Substituent group selected for the R4 position & Categorical \\ \hline
 & r5\_label & 11 & Substituent group selected for the R5 position & Categorical \\ \hline
	\textbf{Outputs} & abs\_lam\_diff & $[0, +\infty)$ & Absolute wavelength difference to be minimized & Continuous \\ \hline
 & ered & $(-\infty, +\infty)$ & Reduction potential to be minimized & Continuous \\ \hline
 & gsol & $(-\infty, +\infty)$ & Solvation free energy to be minimized & Continuous \\ \hline
\end{tabularx}
\vspace{1ex}
\caption{Input and Output variables for Redox Flow Battery task.}
\end{table}

\paragraph{Crystallization Process \citep{vapdiff_crystal}.}
This task aims to optimize the conditions of a vapor diffusion crystallization experiment, where the goal is to maximize the ordinal \textbf{crystal\_score}. The search space is mixed-variable and includes the choice of \textbf{organic} cation and \textbf{solvent}, together with \textbf{concentration}, \textbf{volume}, \textbf{time}, and \textbf{temperature} settings. Specifically, the dataset contains one categorical organic component, four continuous composition and volume variables, one categorical solvent choice, and two discrete operating conditions for reaction time and reaction temperature. The task focuses on identifying feasible crystallization conditions that promote better crystal formation outcomes.

\begin{table}[H]
\centering
\small
\renewcommand{\arraystretch}{1.2}
\renewcommand{\tabularxcolumn}[1]{>{\raggedright\arraybackslash}m{#1}}
\begin{tabularx}{\linewidth}{|l|l|l|X|l|}
\hline
	\textbf{Input/Output} & \textbf{Name} & \textbf{Range / Count} & \textbf{Description} & \textbf{Type} \\ \hline
	\textbf{Inputs} & organic & 17 & Organic ammonium component& Categorical \\ \hline
 & organic\_molarity & [0, 10] & Molarity of the organic component & Continuous \\ \hline
 & solvent & 3 & Solvent& Categorical \\ \hline
 & solvent\_molarity & [0, 15] & Molarity of the solvent component & Continuous \\ \hline
 & inorganic\_molarity & [0, 5] & Molarity of the inorganic precursor & Continuous \\ \hline
 & acid\_molarity & [0, 25] & Molarity of the acid additive & Continuous \\ \hline
 & alpha\_vial\_volume & [0, 0.001] & Volume of the alpha vial mixture & Continuous \\ \hline
 & beta\_vial\_volume & [0, 0.001] & Volume of the beta vial mixture & Continuous \\ \hline
 & reaction\_time & 3 & Reaction time& Discrete \\ \hline
 & reaction\_temperature & 3 & Reaction temperature& Discrete \\ \hline
	\textbf{Output} & crystal\_score & 4 & Ordinal crystallization outcome with four ordered levels from clear solution to large crystallites & Ordinal \\ \hline
\end{tabularx}
\vspace{1ex}
\caption{Input and Output variables for Crystallization Process task.}
\end{table}
\subsection{Real-world Formulation Experiment}\label{app:real_exp}

\paragraph{Task setting.}
This task aims to maximize the measured \textbf{thermal conductivity} of a thermally conductive adhesive while satisfying mechanical and electrical performance requirements. The mixed-variable search space involves selecting 10 fillers from a library of 29 candidate materials and optimizing their 10 corresponding continuous composition variables, as summarized in Table~\ref{tab:real_exp_variables}. The thermal conductivity qualification threshold is $2.8\,\mathrm{W/(m\cdot K)}$. The formulation must also satisfy tensile strength $\geq 1\,\mathrm{MPa}$, shear strength $\geq 2\,\mathrm{MPa}$, and dielectric strength $\geq 7.9\,\mathrm{kV/mm}$. The experiment includes 34 physical evaluations, each requiring approximately three days.
\begin{table}[H]
\centering
\small
\renewcommand{\arraystretch}{1.2}
\renewcommand{\tabularxcolumn}[1]{>{\raggedright\arraybackslash}m{#1}}
\begin{tabularx}{\linewidth}{|l|l|l|X|l|}
\hline
\textbf{Input/Output} & \textbf{Name} & \textbf{Range / Count} & \textbf{Description} & \textbf{Type} \\ \hline
\textbf{Inputs} & filler\_1 & 29 & First filler & Categorical \\ \hline
 & filler\_2 & 29 & Second filler & Categorical \\ \hline
 & filler\_3 & 29 & Third filler & Categorical \\ \hline
 & filler\_4 & 29 & Fourth filler & Categorical \\ \hline
 & filler\_5 & 29 & Fifth filler & Categorical \\ \hline
 & filler\_6 & 29 & Sixth filler & Categorical \\ \hline
 & filler\_7 & 29 & Seventh filler & Categorical \\ \hline
 & filler\_8 & 29 & Eighth filler & Categorical \\ \hline
 & filler\_9 & 29 & Ninth filler & Categorical \\ \hline
 & filler\_10 & 29 & Tenth filler & Categorical \\ \hline
 & composition\_1 & -- & Amount of filler\_1 & Continuous \\ \hline
 & composition\_2 & -- & Amount of filler\_2 & Continuous \\ \hline
 & composition\_3 & -- & Amount of filler\_3 & Continuous \\ \hline
 & composition\_4 & -- & Amount of filler\_4 & Continuous \\ \hline
 & composition\_5 & -- & Amount of filler\_5 & Continuous \\ \hline
 & composition\_6 & -- & Amount of filler\_6 & Continuous \\ \hline
 & composition\_7 & -- & Amount of filler\_7 & Continuous \\ \hline
 & composition\_8 & -- & Amount of filler\_8 & Continuous \\ \hline
 & composition\_9 & -- & Amount of filler\_9 & Continuous \\ \hline
 & composition\_10 & -- & Amount of filler\_10 & Continuous \\ \hline
\textbf{Outputs} & conductivity & -- & Thermal conductivity ($\mathrm{W/(m\cdot K)}$) & Continuous \\ \hline
 & tensile & -- & Tensile strength ($\mathrm{MPa}$) & Continuous \\ \hline
 & shear & -- & Shear strength ($\mathrm{MPa}$) & Continuous \\ \hline
 & dielectric & -- & Dielectric strength ($\mathrm{kV/mm}$) & Continuous \\ \hline
\end{tabularx}
\vspace{1ex}
\caption{Input and Output variables for the Real-world Formulation Experiment.}
\label{tab:real_exp_variables}
\end{table}
\paragraph{Results.}
As shown in Fig.~\ref{fig:real_exp}, the best measured thermal conductivity increases from $0.1823\,\mathrm{W/(m\cdot K)}$ at the first evaluation to $2.995\,\mathrm{W/(m\cdot K)}$ after 34 evaluations, exceeding the qualification threshold. The final formulation achieves a tensile strength of $3.36\,\mathrm{MPa}$, a shear strength of $2.92\,\mathrm{MPa}$, and a dielectric strength of $23.5\,\mathrm{kV/mm}$, satisfying all three performance requirements.
\subsection{More Results}
\paragraph{Supplementary ablation experiment.}
For the ablation study in Fig. \ref{fig:e_ablation}, We compare the full TagBO framework with two reduced variants using only LLM-derived scoring features or only physicochemical property features across six datasets, and the trajectories show that the full method generally reaches strong objective values in fewer optimization steps while achieving comparable or better final performance in most cases.

\begin{figure}[h]
    \centering
    \includegraphics[width=1\textwidth]{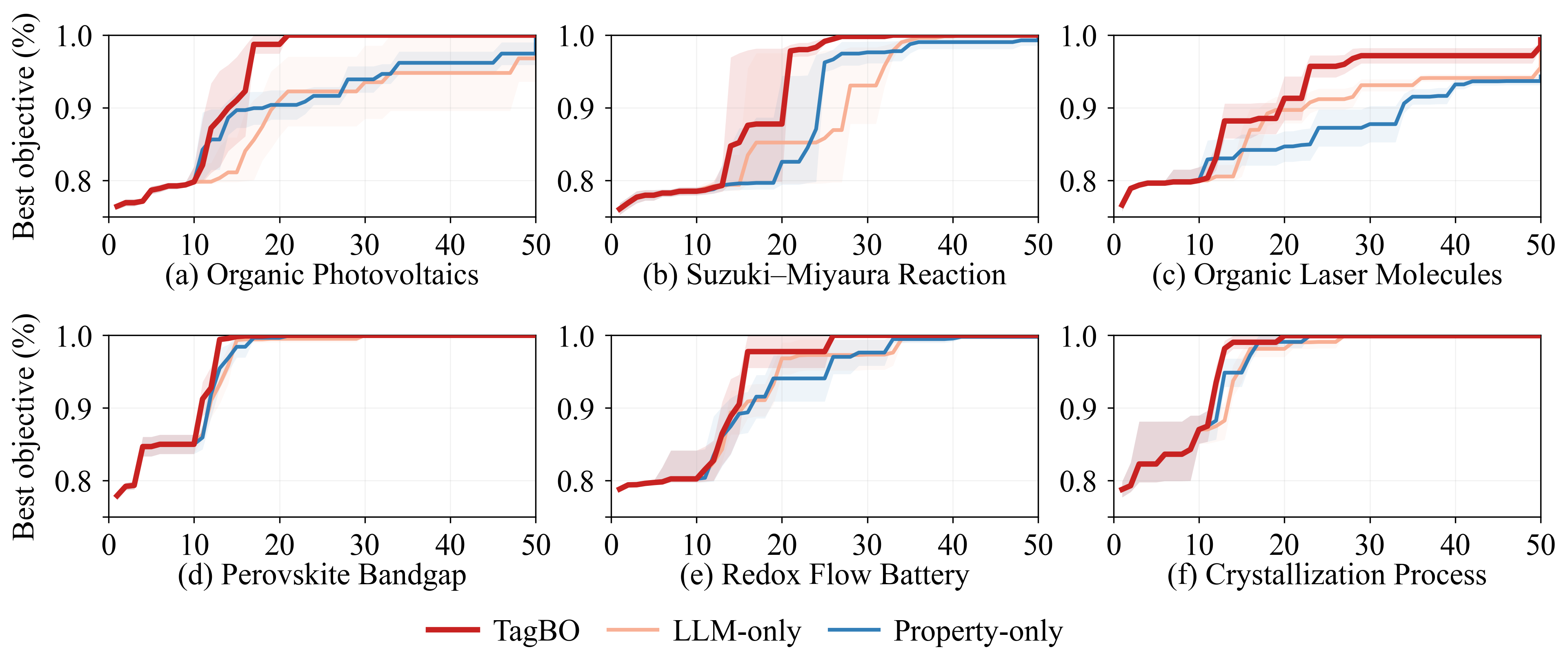}
    \caption{Ablation study of TagBO across six datasets. We compare the full TagBO framework with two reduced variants using only LLM-derived scoring features or only physicochemical property features.}
    \label{fig:e_ablation}
\end{figure}

To investigate the effect of model scale and reasoning capability, we compare Intern-S1-241B Thinking, Qwen3-235B, Qwen3-30B (Instruct and Thinking), Qwen3-8B, Qwen3-4B, Llama-3-70B, and Llama-3-3B as LLM backbones, as shown in Fig. \ref{fig:e_ablation_2}. Larger models and Thinking variants generally require fewer iterations to reach 95\% of the oracle optimum, with Intern-S1-241B Thinking achieving the best overall sample efficiency.

\begin{figure}[h]
    \centering
    \includegraphics[width=1\textwidth]{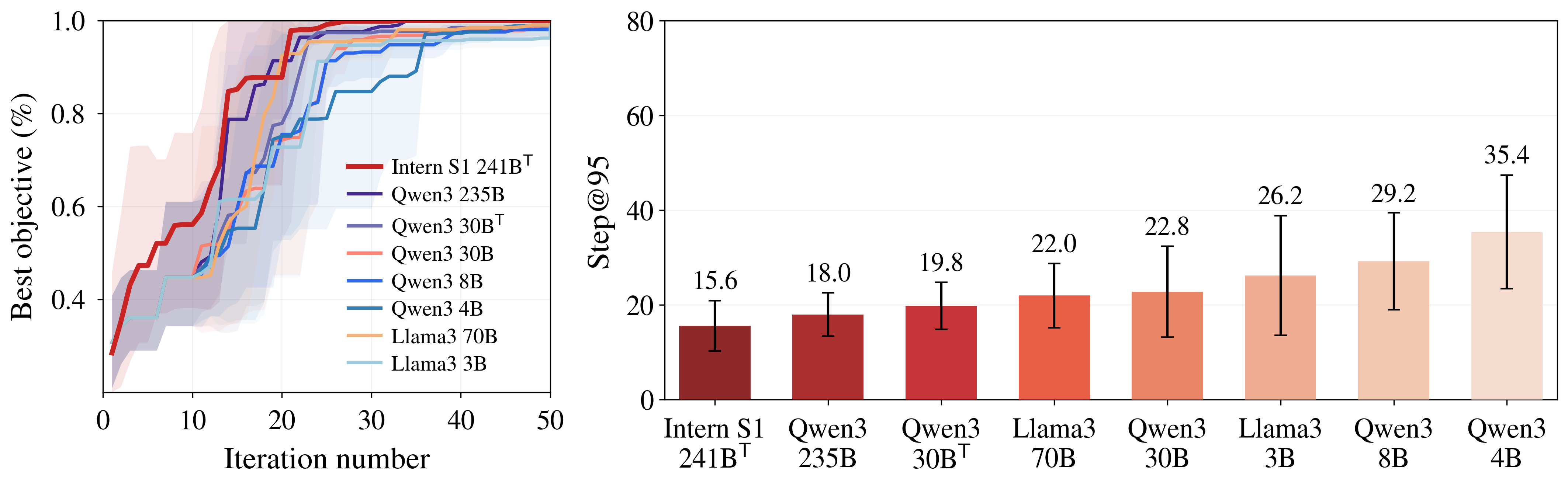}
    \caption{Optimization performance of LLM variants for feature extraction on the Suzuki–Miyaura reaction dataset. \textbf{Left}: the optimization trajectories, where curves denote the best objective value found so far, normalized by the oracle optimum. \textbf{Right}: the mean Step@95\%, where bar height indicates the average number of iterations required to reach 95\% of the optimum, and error bars reflect variation across seeds. $*^T$ denotes Thinking variants.}
    \label{fig:e_ablation_2}
\end{figure}

\paragraph{Model Stability Analysis.}
To evaluate repeatability, we perform six independent runs of Intern-S1-241B Thinking on the Suzuki–Miyaura reaction dataset. As shown in Fig.~\ref{fig:stable}, the runs show similar convergence trends, enter the high-performance regime within a narrow iteration range, and reach comparable best-so-far values. The mean Step@95 varies only slightly across runs, indicating limited variation in convergence speed. These results suggest that the proposed feature extraction pipeline is stable under repeated runs.

\begin{figure}[h]
    \centering
    \includegraphics[width=1\textwidth]{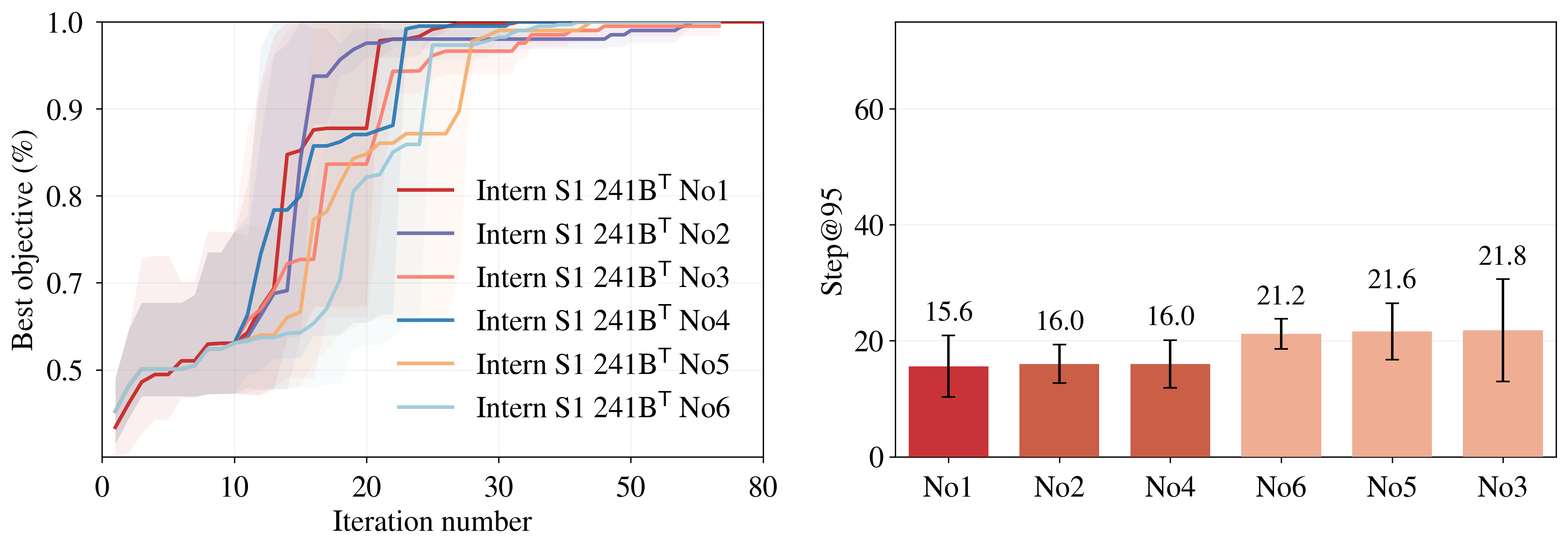}
  \caption{Optimization performance of repeated Intern-S1-241B Thinking variants for feature extraction on the Suzuki–Miyaura reaction dataset. \textbf{Left}: the optimization trajectories, where curves denote the best objective value found so far, normalized by the oracle optimum. \textbf{Right}: the mean Step@95\%, where bar height indicates the average number of iterations required to reach 95\% of the optimum, and error bars reflect variation across seeds. $*^T$ denote Thinking variants}
  \label{fig:stable}
\end{figure}

\section{Limitations}
The effectiveness of TagBO depends on the quality of the task-aware relational priors extracted from the LLM. In domains where relevant scientific knowledge is sparse, highly specialized, or poorly represented in the pretraining corpus, the induced ordinal relations may become noisy or misaligned with the downstream optimization objective. We believe that retrieval-augmented generation (RAG) and domain-specific knowledge integration may help alleviate this limitation by grounding relational reasoning in external scientific resources, and we plan to explore this direction in future work.

In addition, TagBO primarily modifies the representation and surrogate prior of Bayesian optimization while leaving the underlying BO optimization procedure unchanged. Consequently, our theoretical analysis focuses on representation-level surrogate behavior rather than cumulative regret guarantees for the overall sequential optimization process. Deriving regret bounds that explicitly characterize the interaction between task-aware representations, kernel geometry, and acquisition dynamics remains an important direction for future research.

\section*{LLM Usage Statement}

Large language models (LLMs) are core components of the proposed method. Specifically. But, LLMs were not used to generate experimental results automatically without human verification. All reported analyses, evaluations, and conclusions were verified by the authors.
%%%%%%%%%%%%%%%%%%%%%%%%%%%%%%%%%%%%%%%%%%%%%%%%%%%%%%%%%%%%

\newpage
\section*{NeurIPS Paper Checklist}

%%% BEGIN INSTRUCTIONS %%%

%%% END INSTRUCTIONS %%%

\begin{enumerate}

\item {\bf Claims}
    \item[] Question: Do the main claims made in the abstract and introduction accurately reflect the paper's contributions and scope?
    \item[] Answer: \answerYes{} % Replace by \answerYes{}, \answerNo{}, or \answerNA{}.
    \item[] Justification: The abstract and introduction accurately summarize the main contributions and scope of the paper.
    \item[] Guidelines:
    \begin{itemize}
        \item The answer \answerNA{} means that the abstract and introduction do not include the claims made in the paper.
        \item The abstract and/or introduction should clearly state the claims made, including the contributions made in the paper and important assumptions and limitations. A \answerNo{} or \answerNA{} answer to this question will not be perceived well by the reviewers. 
        \item The claims made should match theoretical and experimental results, and reflect how much the results can be expected to generalize to other settings. 
        \item It is fine to include aspirational goals as motivation as long as it is clear that these goals are not attained by the paper. 
    \end{itemize}

\item {\bf Limitations}
    \item[] Question: Does the paper discuss the limitations of the work performed by the authors?
    \item[] Answer: \answerYes{} % Replace by \answerYes{}, \answerNo{}, or \answerNA{}.
    \item[] Justification: The limitations is discussed in Appendix.
    \item[] Guidelines:
    \begin{itemize}
        \item The answer \answerNA{} means that the paper has no limitation while the answer \answerNo{} means that the paper has limitations, but those are not discussed in the paper. 
        \item The authors are encouraged to create a separate ``Limitations'' section in their paper.
        \item The paper should point out any strong assumptions and how robust the results are to violations of these assumptions (e.g., independence assumptions, noiseless settings, model well-specification, asymptotic approximations only holding locally). The authors should reflect on how these assumptions might be violated in practice and what the implications would be.
        \item The authors should reflect on the scope of the claims made, e.g., if the approach was only tested on a few datasets or with a few runs. In general, empirical results often depend on implicit assumptions, which should be articulated.
        \item The authors should reflect on the factors that influence the performance of the approach. For example, a facial recognition algorithm may perform poorly when image resolution is low or images are taken in low lighting. Or a speech-to-text system might not be used reliably to provide closed captions for online lectures because it fails to handle technical jargon.
        \item The authors should discuss the computational efficiency of the proposed algorithms and how they scale with dataset size.
        \item If applicable, the authors should discuss possible limitations of their approach to address problems of privacy and fairness.
        \item While the authors might fear that complete honesty about limitations might be used by reviewers as grounds for rejection, a worse outcome might be that reviewers discover limitations that aren't acknowledged in the paper. The authors should use their best judgment and recognize that individual actions in favor of transparency play an important role in developing norms that preserve the integrity of the community. Reviewers will be specifically instructed to not penalize honesty concerning limitations.
    \end{itemize}

\item {\bf Theory assumptions and proofs}
    \item[] Question: For each theoretical result, does the paper provide the full set of assumptions and a complete (and correct) proof?
    \item[] Answer: \answerYes{} % Replace by \answerYes{}, \answerNo{}, or \answerNA{}.
    \item[] Justification: Proof is provided in Appendix.
    \item[] Guidelines:
    \begin{itemize}
        \item The answer \answerNA{} means that the paper does not include theoretical results. 
        \item All the theorems, formulas, and proofs in the paper should be numbered and cross-referenced.
        \item All assumptions should be clearly stated or referenced in the statement of any theorems.
        \item The proofs can either appear in the main paper or the supplemental material, but if they appear in the supplemental material, the authors are encouraged to provide a short proof sketch to provide intuition. 
        \item Inversely, any informal proof provided in the core of the paper should be complemented by formal proofs provided in appendix or supplemental material.
        \item Theorems and Lemmas that the proof relies upon should be properly referenced. 
    \end{itemize}

    \item {\bf Experimental result reproducibility}
    \item[] Question: Does the paper fully disclose all the information needed to reproduce the main experimental results of the paper to the extent that it affects the main claims and/or conclusions of the paper (regardless of whether the code and data are provided or not)?
    \item[] Answer: \answerYes{}% Replace by \answerYes{}, \answerNo{}, or \answerNA{}.
    \item[] Justification: We provide code in supplemental materials.
    \item[] Guidelines:
    \begin{itemize}
        \item The answer \answerNA{} means that the paper does not include experiments.
        \item If the paper includes experiments, a \answerNo{} answer to this question will not be perceived well by the reviewers: Making the paper reproducible is important, regardless of whether the code and data are provided or not.
        \item If the contribution is a dataset and\slash or model, the authors should describe the steps taken to make their results reproducible or verifiable. 
        \item Depending on the contribution, reproducibility can be accomplished in various ways. For example, if the contribution is a novel architecture, describing the architecture fully might suffice, or if the contribution is a specific model and empirical evaluation, it may be necessary to either make it possible for others to replicate the model with the same dataset, or provide access to the model. In general. releasing code and data is often one good way to accomplish this, but reproducibility can also be provided via detailed instructions for how to replicate the results, access to a hosted model (e.g., in the case of a large language model), releasing of a model checkpoint, or other means that are appropriate to the research performed.
        \item While NeurIPS does not require releasing code, the conference does require all submissions to provide some reasonable avenue for reproducibility, which may depend on the nature of the contribution. For example
        \begin{enumerate}
            \item If the contribution is primarily a new algorithm, the paper should make it clear how to reproduce that algorithm.
            \item If the contribution is primarily a new model architecture, the paper should describe the architecture clearly and fully.
            \item If the contribution is a new model (e.g., a large language model), then there should either be a way to access this model for reproducing the results or a way to reproduce the model (e.g., with an open-source dataset or instructions for how to construct the dataset).
            \item We recognize that reproducibility may be tricky in some cases, in which case authors are welcome to describe the particular way they provide for reproducibility. In the case of closed-source models, it may be that access to the model is limited in some way (e.g., to registered users), but it should be possible for other researchers to have some path to reproducing or verifying the results.
        \end{enumerate}
    \end{itemize}

\item {\bf Open access to data and code}
    \item[] Question: Does the paper provide open access to the data and code, with sufficient instructions to faithfully reproduce the main experimental results, as described in supplemental material?
    \item[] Answer:  \answerYes{} % Replace by \answerYes{}, \answerNo{}, or \answerNA{}.
    \item[] Justification:  We provide code in supplemental materials.
    \item[] Guidelines:
    \begin{itemize}
        \item The answer \answerNA{} means that paper does not include experiments requiring code.
        \item Please see the NeurIPS code and data submission guidelines (\url{https://neurips.cc/public/guides/CodeSubmissionPolicy}) for more details.
        \item While we encourage the release of code and data, we understand that this might not be possible, so \answerNo{} is an acceptable answer. Papers cannot be rejected simply for not including code, unless this is central to the contribution (e.g., for a new open-source benchmark).
        \item The instructions should contain the exact command and environment needed to run to reproduce the results. See the NeurIPS code and data submission guidelines (\url{https://neurips.cc/public/guides/CodeSubmissionPolicy}) for more details.
        \item The authors should provide instructions on data access and preparation, including how to access the raw data, preprocessed data, intermediate data, and generated data, etc.
        \item The authors should provide scripts to reproduce all experimental results for the new proposed method and baselines. If only a subset of experiments are reproducible, they should state which ones are omitted from the script and why.
        \item At submission time, to preserve anonymity, the authors should release anonymized versions (if applicable).
        \item Providing as much information as possible in supplemental material (appended to the paper) is recommended, but including URLs to data and code is permitted.
    \end{itemize}

\item {\bf Experimental setting/details}
    \item[] Question: Does the paper specify all the training and test details (e.g., data splits, hyperparameters, how they were chosen, type of optimizer) necessary to understand the results?
    \item[] Answer: \answerYes{} % Replace by \answerYes{}, \answerNo{}, or \answerNA{}.
    \item[] Justification: We provide code in supplemental materials.
    \item[] Guidelines:
    \begin{itemize}
        \item The answer \answerNA{} means that the paper does not include experiments.
        \item The experimental setting should be presented in the core of the paper to a level of detail that is necessary to appreciate the results and make sense of them.
        \item The full details can be provided either with the code, in appendix, or as supplemental material.
    \end{itemize}

\item {\bf Experiment statistical significance}
    \item[] Question: Does the paper report error bars suitably and correctly defined or other appropriate information about the statistical significance of the experiments?
    \item[] Answer: \answerYes{} % Replace by \answerYes{}, \answerNo{}, or \answerNA{}.
    \item[] Justification: Variance across runs is reported using shaded regions in Fig. 2 and Fig. 4(a,b), while box plots are used to illustrate the variability in the ablation studies.
    \item[] Guidelines:
    \begin{itemize}
        \item The answer \answerNA{} means that the paper does not include experiments.
        \item The authors should answer \answerYes{} if the results are accompanied by error bars, confidence intervals, or statistical significance tests, at least for the experiments that support the main claims of the paper.
        \item The factors of variability that the error bars are capturing should be clearly stated (for example, train/test split, initialization, random drawing of some parameter, or overall run with given experimental conditions).
        \item The method for calculating the error bars should be explained (closed form formula, call to a library function, bootstrap, etc.)
        \item The assumptions made should be given (e.g., Normally distributed errors).
        \item It should be clear whether the error bar is the standard deviation or the standard error of the mean.
        \item It is OK to report 1-sigma error bars, but one should state it. The authors should preferably report a 2-sigma error bar than state that they have a 96\% CI, if the hypothesis of Normality of errors is not verified.
        \item For asymmetric distributions, the authors should be careful not to show in tables or figures symmetric error bars that would yield results that are out of range (e.g., negative error rates).
        \item If error bars are reported in tables or plots, the authors should explain in the text how they were calculated and reference the corresponding figures or tables in the text.
    \end{itemize}

\item {\bf Experiments compute resources}
    \item[] Question: For each experiment, does the paper provide sufficient information on the computer resources (type of compute workers, memory, time of execution) needed to reproduce the experiments?
    \item[] Answer: \answerYes{} % Replace by \answerYes{}, \answerNo{}, or \answerNA{}.
    \item[] Justification: Variance across runs is reported using shaded regions in Fig. 2 and Fig. 4(a,b), while box plots are used to illustrate the variability in the ablation studies.
    \item[] Guidelines:
    \begin{itemize}
        \item The answer \answerNA{} means that the paper does not include experiments.
        \item The paper should indicate the type of compute workers CPU or GPU, internal cluster, or cloud provider, including relevant memory and storage.
        \item The paper should provide the amount of compute required for each of the individual experimental runs as well as estimate the total compute. 
        \item The paper should disclose whether the full research project required more compute than the experiments reported in the paper (e.g., preliminary or failed experiments that didn't make it into the paper). 
    \end{itemize}
    
\item {\bf Code of ethics}
    \item[] Question: Does the research conducted in the paper conform, in every respect, with the NeurIPS Code of Ethics \url{https://neurips.cc/public/EthicsGuidelines}?
    \item[] Answer: \answerYes{} % Replace by \answerYes{}, \answerNo{}, or \answerNA{}.
    \item[] Justification: The research complies with the NeurIPS Code of Ethics.
    \item[] Guidelines:
    \begin{itemize}
        \item The answer \answerNA{} means that the authors have not reviewed the NeurIPS Code of Ethics.
        \item If the authors answer \answerNo, they should explain the special circumstances that require a deviation from the Code of Ethics.
        \item The authors should make sure to preserve anonymity (e.g., if there is a special consideration due to laws or regulations in their jurisdiction).
    \end{itemize}

\item {\bf Broader impacts}
    \item[] Question: Does the paper discuss both potential positive societal impacts and negative societal impacts of the work performed?
    \item[] Answer: \answerNA{} % Replace by \answerYes{}, \answerNo{}, or \answerNA{}.
    \item[] Justification: There is no societal impact of the work performed.
    \item[] Guidelines:
    \begin{itemize}
        \item The answer \answerNA{} means that there is no societal impact of the work performed.
        \item If the authors answer \answerNA{} or \answerNo, they should explain why their work has no societal impact or why the paper does not address societal impact.
        \item Examples of negative societal impacts include potential malicious or unintended uses (e.g., disinformation, generating fake profiles, surveillance), fairness considerations (e.g., deployment of technologies that could make decisions that unfairly impact specific groups), privacy considerations, and security considerations.
        \item The conference expects that many papers will be foundational research and not tied to particular applications, let alone deployments. However, if there is a direct path to any negative applications, the authors should point it out. For example, it is legitimate to point out that an improvement in the quality of generative models could be used to generate Deepfakes for disinformation. On the other hand, it is not needed to point out that a generic algorithm for optimizing neural networks could enable people to train models that generate Deepfakes faster.
        \item The authors should consider possible harms that could arise when the technology is being used as intended and functioning correctly, harms that could arise when the technology is being used as intended but gives incorrect results, and harms following from (intentional or unintentional) misuse of the technology.
        \item If there are negative societal impacts, the authors could also discuss possible mitigation strategies (e.g., gated release of models, providing defenses in addition to attacks, mechanisms for monitoring misuse, mechanisms to monitor how a system learns from feedback over time, improving the efficiency and accessibility of ML).
    \end{itemize}
    
\item {\bf Safeguards}
    \item[] Question: Does the paper describe safeguards that have been put in place for responsible release of data or models that have a high risk for misuse (e.g., pre-trained language models, image generators, or scraped datasets)?
    \item[] Answer: \answerNA{} % Replace by \answerYes{}, \answerNo{}, or \answerNA{}.
    \item[] Justification: The proposed method does not introduce high-risk generative models or sensitive datasets requiring special safeguards.
    \item[] Guidelines:
    \begin{itemize}
        \item The answer \answerNA{} means that the paper poses no such risks.
        \item Released models that have a high risk for misuse or dual-use should be released with necessary safeguards to allow for controlled use of the model, for example by requiring that users adhere to usage guidelines or restrictions to access the model or implementing safety filters. 
        \item Datasets that have been scraped from the Internet could pose safety risks. The authors should describe how they avoided releasing unsafe images.
        \item We recognize that providing effective safeguards is challenging, and many papers do not require this, but we encourage authors to take this into account and make a best faith effort.
    \end{itemize}

\item {\bf Licenses for existing assets}
    \item[] Question: Are the creators or original owners of assets (e.g., code, data, models), used in the paper, properly credited and are the license and terms of use explicitly mentioned and properly respected?
    \item[] Answer: \answerYes{} % Replace by \answerYes{}, \answerNo{}, or \answerNA{}.
    \item[] Justification: All datasets and external codebases used in this work are properly cited, and their licenses and usage terms are respected.
    \item[] Guidelines:
    \begin{itemize}
        \item The answer \answerNA{} means that the paper does not use existing assets.
        \item The authors should cite the original paper that produced the code package or dataset.
        \item The authors should state which version of the asset is used and, if possible, include a URL.
        \item The name of the license (e.g., CC-BY 4.0) should be included for each asset.
        \item For scraped data from a particular source (e.g., website), the copyright and terms of service of that source should be provided.
        \item If assets are released, the license, copyright information, and terms of use in the package should be provided. For popular datasets, \url{paperswithcode.com/datasets} has curated licenses for some datasets. Their licensing guide can help determine the license of a dataset.
        \item For existing datasets that are re-packaged, both the original license and the license of the derived asset (if it has changed) should be provided.
        \item If this information is not available online, the authors are encouraged to reach out to the asset's creators.
    \end{itemize}

\item {\bf New assets}
    \item[] Question: Are new assets introduced in the paper well documented and is the documentation provided alongside the assets?
    \item[] Answer:\answerNA{} % Replace by \answerYes{}, \answerNo{}, or \answerNA{}.
    \item[] Justification: This paper does not introduce new public assets such as datasets or models.
    \item[] Guidelines:
    \begin{itemize}
        \item The answer \answerNA{} means that the paper does not release new assets.
        \item Researchers should communicate the details of the dataset\slash code\slash model as part of their submissions via structured templates. This includes details about training, license, limitations, etc. 
        \item The paper should discuss whether and how consent was obtained from people whose asset is used.
        \item At submission time, remember to anonymize your assets (if applicable). You can either create an anonymized URL or include an anonymized zip file.
    \end{itemize}

\item {\bf Crowdsourcing and research with human subjects}
    \item[] Question: For crowdsourcing experiments and research with human subjects, does the paper include the full text of instructions given to participants and screenshots, if applicable, as well as details about compensation (if any)? 
    \item[] Answer: \answerNA{} % Replace by \answerYes{}, \answerNo{}, or \answerNA{}.
    \item[] Justification: This work does not involve crowdsourcing or human subject experiments.
    \item[] Guidelines:
    \begin{itemize}
        \item The answer \answerNA{} means that the paper does not involve crowdsourcing nor research with human subjects.
        \item Including this information in the supplemental material is fine, but if the main contribution of the paper involves human subjects, then as much detail as possible should be included in the main paper. 
        \item According to the NeurIPS Code of Ethics, workers involved in data collection, curation, or other labor should be paid at least the minimum wage in the country of the data collector. 
    \end{itemize}

\item {\bf Institutional review board (IRB) approvals or equivalent for research with human subjects}
    \item[] Question: Does the paper describe potential risks incurred by study participants, whether such risks were disclosed to the subjects, and whether Institutional Review Board (IRB) approvals (or an equivalent approval/review based on the requirements of your country or institution) were obtained?
    \item[] Answer: \answerNA{}% Replace by \answerYes{}, \answerNo{}, or \answerNA{}.
    \item[] Justification: This work does not involve human subject research requiring IRB approval.
    \item[] Guidelines:
    \begin{itemize}
        \item The answer \answerNA{} means that the paper does not involve crowdsourcing nor research with human subjects.
        \item Depending on the country in which research is conducted, IRB approval (or equivalent) may be required for any human subjects research. If you obtained IRB approval, you should clearly state this in the paper. 
        \item We recognize that the procedures for this may vary significantly between institutions and locations, and we expect authors to adhere to the NeurIPS Code of Ethics and the guidelines for their institution. 
        \item For initial submissions, do not include any information that would break anonymity (if applicable), such as the institution conducting the review.
    \end{itemize}

\item {\bf Declaration of LLM usage}
    \item[] Question: Does the paper describe the usage of LLMs if it is an important, original, or non-standard component of the core methods in this research? Note that if the LLM is used only for writing, editing, or formatting purposes and does \emph{not} impact the core methodology, scientific rigor, or originality of the research, declaration is not required.
    %this research? 
    \item[] Answer: \answerYes{} % Replace by \answerYes{}, \answerNo{}, or \answerNA{}.
    \item[] Justification: LLMs are core components of the proposed framework
    \item[] Guidelines:
    \begin{itemize}
        \item The answer \answerNA{} means that the core method development in this research does not involve LLMs as any important, original, or non-standard components.
        \item Please refer to our LLM policy in the NeurIPS handbook for what should or should not be described.
    \end{itemize}

\end{enumerate}

\end{document}